\documentclass{article}

% if you need to pass options to natbib, use, e.g.:
% \PassOptionsToPackage{numbers, compress}{natbib}
% before loading neurips_2025
% \PassOptionsToPackage{sort,square,numbers}{natbib}
% \PassOptionsToPackage{authoryear}{natbib}

% The authors should use one of these tracks.
% Before accepting by the NeurIPS conference, select one of the options below.
% 0. "default" for submission
 \usepackage[nonatbib, main, final]{neurips_2025}
 \usepackage[sort,round,compress]{natbib}

\usepackage[utf8]{inputenc} % allow utf-8 input
\usepackage[T1]{fontenc}    % use 8-bit T1 fonts
\usepackage{hyperref}       % hyperlinks
\usepackage{url}            % simple URL typesetting
\usepackage{booktabs}       % professional-quality tables
\usepackage{amsfonts}       % blackboard math symbols
\usepackage{nicefrac}       % compact symbols for 1/2, etc.
\usepackage{microtype}      % microtypography
\usepackage{xcolor}         % colors
\usepackage{enumitem}
\usepackage{amssymb}
\usepackage{amsthm}
\usepackage[font=small,labelfont=bf]{caption}
\usepackage{thmtools, thm-restate}

% Includes a bunch of other packages (should probably purge some of these if they're not necessary)
%
%% Font and spacing
%\usepackage{xspace} 
\usepackage{enumitem} % left margin for itemize
\usepackage{color, colortbl}
\usepackage{xcolor}
\usepackage{pifont}

%% 
%5 Figures

\usepackage{graphicx}
\usepackage{epstopdf}
\usepackage{subcaption}
\usepackage{hyperref}
\usepackage{comment}
\usepackage{amsmath}
\usepackage{bm}
\usepackage{xcolor}
%%
%% Reference 

\usepackage[capitalise]{cleveref}
%\usepackage{newclude}
%% 
%% Math 
\usepackage{amssymb}
\usepackage{siunitx}
\sisetup{detect-all}
%% 
%% Listings
\usepackage{listings}
\usepackage[utf8]{inputenc}

%% 
%% Tables
\usepackage{booktabs}  
\usepackage{tabu}
\usepackage{array}
\usepackage{multirow}
\usepackage{diagbox}
\usepackage{amsthm}
\usepackage{wrapfig}
\usepackage{placeins}

\usepackage{algorithm,lipsum,changepage}
\usepackage{shortcuts}

%%
%% Format
\usepackage{balance}

\usepackage{enumitem}
\usepackage{microtype}
\usepackage{calc}
\usepackage{ifthen}
\usepackage{hyphenat}
\hyphenation{prog-ram-mable swit-ches micro-bench-marks pipe-lines}

\usepackage{fancyhdr}
\usepackage{color}
\usepackage{algorithmic}
\usepackage{natbib}
\usepackage{eso-pic} % used by \AddToShipoutPicture 
\usepackage{forloop}

\usepackage{hyperref}       % hyperlinks
\usepackage{url}     % simple URL typesetting
\usepackage{booktabs}       % professional-quality tables
\usepackage{amsfonts}       % blackboard math symbols
\usepackage{nicefrac}       % compact symbols for 1/2, etc.
\usepackage{microtype}      % microtypography
\usepackage{xcolor}         % colors
\usepackage{mathabx}

% Note. For the workshop paper template, both \title{} and \workshoptitle{} are required, with the former indicating the paper title shown in the title and the latter indicating the workshop title displayed in the footnote. 
\title{Informed Initialization for Bayesian Optimization\newline and Active Learning}
% TODO: Switch to the title below post-submission
% \title{No Second Chance for a First Impression:\newline Informed Initialization for Few-Shot Bayesian Optimization\newline and Active Learning} 
% Alternative titles
% Few-Shot Bayesian Optimization via Hyperparameter-Aware Initialization
% First Impressions Matter: Optimizing Initialization for Few-Shot Bayesian Optimization

% The \author macro works with any number of authors. There are two commands
% used to separate the names and addresses of multiple authors: \And and \AND.
%
% Using \And between authors leaves it to LaTeX to determine where to break the
% lines. Using \AND forces a line break at that point. So, if LaTeX puts 3 of 4
% authors names on the first line, and the last on the second line, try using
% \AND instead of \And before the third author name.
\author{%
  Carl Hvarfner\\
  Meta\\
  \texttt{hvarfner@meta.com} \\
  % examples of more authors
  \And
  David Eriksson\\
  Meta\\
  \texttt{deriksson@meta.com} \\
  % Coauthor \\
  \And
  Eytan Bakshy\\
  Meta\\
  \texttt{ebakshy@meta.com} \\
  % Affiliation \\
  \And
  Max Balandat\\
  Meta\\
  \texttt{balandat@meta.com}
  % Address \\
  % \texttt{email} \\
  % \AND
  % Coauthor \\
  % Affiliation \\
  % Address \\
  % \texttt{email} \\
  % \And
  % Coauthor \\
  % Affiliation \\
  % Address \\
  % \texttt{email} \\
  % \And
  % Coauthor \\
  % Affiliation \\
  % Address \\
  % \texttt{email} \\
}

%%%%%% COMMAND DEFINITIONS

\newcommand{\data}[0]{\mathcal{D}}
\newcommand{\hps}[0]{\bm{\theta}}

\newcommand{\E}[0]{\mathbb{E}}

\newcommand{\logei}[0]{\texttt{LogEI}}

\newcommand{\setx}[0]{{\bm{X}}}
\newcommand{\ent}[0]{\text{H}}
\newcommand{\eig}{\textsc{EIG}\xspace}
\newcommand{\algo}{\textsc{HIPE}\xspace}

% THEOREMS
\theoremstyle{plain}

%%% COLORED COMMENTS

%\definecolor{brat}{RGB}{137, 206, 0}

%%%%%%%%%%%%%%%%%%%%%%%%%%%%%%%%%%%%%%%%%%%%%%%%%%%%%%%%
%%%%%%%%%%%%%%%%%%%%%%%%%%%%%%%%%%%%%%%%%%%%%%%%%%%%%%%%
%%%%%%%%%%%%%%%%%%%%%%%%%%%%%%%%%%%%%%%%%%%%%%%%%%%%%%%%
\begin{document}
\maketitle

\begin{abstract}
    Bayesian Optimization is a widely used method for optimizing expensive black-box functions, relying on probabilistic surrogate models such as Gaussian Processes. 
    The quality of the surrogate model is crucial for good optimization performance, especially in the few-shot setting where only a small number of batches of points can be evaluated. In this setting, the initialization plays a critical role in shaping the surrogate's predictive quality and guiding subsequent optimization. Despite this, practitioners typically rely on (quasi-)random designs to cover the input space. However, such approaches neglect two key factors: (a) space-filling designs may not be desirable to reduce predictive uncertainty, and (b) efficient hyperparameter learning during initialization is essential for high-quality prediction, which may conflict with space-filling designs. To address these limitations, we propose Hyperparameter-Informed Predictive Exploration (\algo), a novel acquisition strategy that balances predictive uncertainty reduction with hyperparameter learning using information-theoretic principles. We derive a closed-form expression for \algo in the Gaussian Process setting and demonstrate its effectiveness through extensive experiments in active learning and few-shot BO. Our results show that \algo outperforms standard initialization strategies in terms of predictive accuracy, hyperparameter identification, and subsequent optimization performance, particularly in large-batch, few-shot settings relevant to many real-world Bayesian Optimization applications.
\end{abstract}
\section{Introduction}

Bayesian Optimization (BO) \citep{Mockus1978, jonesei, frazier2018tutorial, garnett_bayesoptbook_2023} is a principled framework for sample-efficient global optimization of black-box functions with applications across diverse fields such as biological discovery \citep{griffiths2020constrained, pmlr-v162-stanton22a}, materials science \citep{frazier2016bayesian, attia2020closed, ament2023sustainableconcretebayesianoptimization}, online A/B testing \citep{agarwal2018online, letham2019constrained, feng2025fastandslow}, and machine learning hyperparameter optimization (HPO) \citep{pmlr-v32-snoek14, Feurer2015InitializingBH}. 
BO combines a probabilistic surrogate model---commonly a Gaussian Process (GP)---with an acquisition function to select where to evaluate the unknown objective function. 
In many applications, the runtime of a single black-box function evaluation may restrict the experimenter to a small number of \textit{batches} -- the number of sequential rounds of experiments -- but many real-world experimental setups permit conducting multiple experiment simultaneously (e.g., on a parallel compute cluster, in a randomized controlled trial, or batch-testing multiple specimens in a lab).% \citep{10.5555/3305381.3305533, letham2019constrained, pmlr-v162-stanton22a, mallik2023priorband}.

The success of Bayesian Optimization in practice is highly sensitive to the quality of the surrogate model \citep{eriksson2021high, hvarfner2023selfcorrecting, pmlr-v216-foldager23a}. 
This is a challenge particularly during the early stages of optimization when few observations are available. In these early stages, a small set of inputs is typically selected at random, or via space-filling or quasi-random sampling strategies such as Latin Hypercube Sampling (LHS) or scrambled Sobol' sequences \citep{10.1145/3377930.3390155, owen2023practicalqmc}.
While such strategies aim to achieve broad coverage of the input space for initialization of the surrogate model, they may not necessarily result in improving predictive accuracy, or reduce predictive variance, across the entire input space~\cite{ren2024optimalinitializationbatchbayesian}. 
Moreover, they neglect a a second crucial aspect of modeling: the need to accurately infer the model's hyperparameters \citep{zhang2021ddd}, such as the kernel lengthscales of a GP. 
With accurate hyperparameter estimates, variation in unimportant dimensions will have less influence on the selection of points in subsequent iterations of BO, leading to more sample-efficient optimization \citep{eriksson2021high, hvarfner2023selfcorrecting, pmlr-v202-muller23a}. 
Conversely, poor hyperparameter estimation may cause subsequent BO iterations to fail to make meaningful progress \citep{berkenkamp2019noregret}, which may be caused by misidentifying signal for noise, exploring irrelevant dimensions, or returning poor terminal recommendations~\citep{hvarfner2022joint}.

In this work, we provide a principled approach to addressing this initialization challenge. Our main contributions are as follows:  
\begin{enumerate}[leftmargin=4ex]
    \item We propose Hyperparameter-Informed Predictive Exploration (\algo), an acquisition function for initialization that optimizes for both predictive uncertainty reduction and hyperparameter learning.
    \item We derive a closed-form expression for this objective in the case of Gaussian Process models, and implement a practical Monte Carlo approximation to make it amenable to batched optimization. 
    \item We conduct extensive experiments in active learning and Bayesian Optimization on synthetic and real-world BO tasks, demonstrating that \algo outperforms competing methods in terms of both model accuracy metrics and BO performance in few-shot, large-batch settings.
\end{enumerate}

%%%%%%%%%%%%%%%%%%%%%%%%%%%%%%%%%%%%%%%%%%%%%%%%%%%%%%%%%%%%
\section{Background}
\label{sec:background}

%%%%%%%%%%%%%%%%%%%%%%%%%%%%%%%
\subsection{Gaussian Processes}
\label{subsec:background:GPs}
Gaussian Processes (GPs) are a widely used surrogate model in BO due to their flexibility, closed-form and well-calibrated predictive distributions.   
GPs define a distribution over functions, $\hat{f} \sim \mathcal{GP}(m(\cdot), k(\cdot, \cdot))$, specified by a mean function $m(\cdot)$ and a covariance (kernel) function $k(\cdot, \cdot)$. 
For a given location $\bm{x}$, the function value $\hat{f}(\bm{x})$ is normally distributed, with closed-form expressions for the predictive mean $\mu(\bm{x})$ and variance $\sigma^2(\bm{x})$. 
In practice, the mean function is often kept constant, leaving the covariance function to capture the structural properties of the objective.

To model differences in variable importance in GPs with stationary kernels, each input dimension is commonly scaled by a lengthscale hyperparameter $\ell_i$, a practice known as Automatic Relevance Determination (ARD) \citep{NIPS1995_7cce53cf}. 
% For $D$-dimensional inputs $\bm{x}=[x_1, x_2, ..., x_D]$ and $\bm{x}'$, the scaled distance is computed as $r^2 = \sum_{i=1}^D (x_i - x_i')^2 / \ell_i^2$. 
Additional, optional hyperparameters include a learnable noise variance~$\sigma_\varepsilon^2$ and signal variance~$\sigma_f^2$. The full set of hyperparameters $\hps = \{\bm{\ell}, \sigma_\varepsilon^2, \sigma_f^2\}$ may be learned either by maximizing the marginal likelihood $p(\data \mid \hps)$ (Maximum Likelihood Estimation, MLE) or by incorporating hyperpriors $p(\hps)$ to perform Maximum A Posteriori (MAP) estimation. 
Alternatively, a fully Bayesian treatment~\citep{osborne2010bayesian, lalchand2020approximate} integrates over $\hps$ to approximate the full Bayesian posterior distribution using Markov Chain Monte Carlo (MCMC) methods, thereby explicitly accounting for hyperparameter uncertainty. For additional background on GPs, see~\citep{10.5555/1162254}.

%%%%%%%%%%%%%%%%%%%%%%%%%%%%%%%
\subsection{Bayesian Optimization} 
\label{sec:bo}

Bayesian Optimization (BO)~\citep{frazier2018tutorial} is a sample-efficient framework for finding the maximizer $\bm{x}_* = \argmax_{\bm{x} \in \mathcal{X}} f(\bm{x})$ of a black-box function $f: \mathcal{X} \rightarrow \mathbb{R}$ over a $D$-dimensional input space $\mathcal{X} = [0, 1]^D$. 
The function $f$ is assumed to be expensive to evaluate and observable only through noisy point-wise measurements, $y(\bm{x}) = f(\bm{x}) + \varepsilon$, where $\varepsilon \sim \mathcal{N}(0, \sigma_\varepsilon^2)$.

At the core of BO is an \emph{acquisition function}, which uses a surrogate model to quantify the (expected) utility of candidate points $\bm{x}$. Acquisition functions balance exploration and exploitation typically through greedy heuristics. 
Popular examples include Expected Improvement (EI)~\citep{jonesei, JMLR:v12:bull11a} and its numerically stable variant, \logei{} \citep{ament2023unexpected}, as well as the Upper Confidence Bound (UCB)~\citep{Srinivas_2012, srinivas2010ucb}. 
In batch BO, multiple points are selected in parallel to accelerate data collection. 
This is often achieved by computing and jointly optimizing a (quasi-)MC estimate of the utility $u$ associated with acquisition function over the full batch $\setx = \{\bm{x}_1, \bm{x}_2, ..., \bm{x}_q\}$ of size $q$~\citep{wilson2017reparam,  wilson2020efficiently, balandat2020botorch}.

%%%%%%%%%%%%%%%%%%%%%%%%%%%%%%%
\subsection{Bayesian Active Learning} 
\label{subsec:background:active_learning}

Bayesian Active Learning (BAL) and Bayesian Experimental Design (BED)~\citep{chaloner1995bed} aim to improve predictive models by selecting data points that are most informative, either with regard to the model or to future predictions. A key quantity is the Expected Information Gain (EIG): 
\begin{equation}
    \eig(\xi;  y(\bm{x})) = \ent[\xi] - \E_{y(\bm{x})}\left[\ent[\xi|y(\bm{x})]\right],
\end{equation}
where $\ent$ is the Shannon (differential) entropy and $\xi$ is a parameter of interest. 
Importantly, EIG is symmetric, and can be equivalently formulated as an entropy reduction over $y(\bm{x})$, instead of $\xi$.

\paragraph{Bayesian Active Learning by Disagreement} 
Bayesian Active Learning by Disagreement (BALD)~\citep{houlsby2011bald, kirsch2019batchbald} selects query points that maximize the mutual information between model predictions and hyperparameters $\hps$:
\begin{equation}
    \text{BALD}(\bm{x}) = \eig(y(\bm{x}); \hps) = \ent[y(\bm{x}) | \mathcal{D}] - \E_{\hps}[\ent[y(\bm{x}) | \mathcal{D}, \hps]].
    \label{eq:BALD}
\end{equation}
BALD identifies locations where models within an ensemble exhibit the greatest disagreement in predictive uncertainty. 
In the GP setting, this often leads to axis-aligned queries when there is high uncertainty in the lengthscales, and to repeated queries when observation noise is highly uncertain. 
Notably, BALD is model-agnostic and can be applied to a wide range of surrogate models and hyperparameters, including subspace models~\citep{garnett2014active} and additive decompositions~\citep{hvarfner2023selfcorrecting, pmlr-v54-gardner17a}. 

\paragraph{Negative Integrated Posterior Variance}

The Negative Integrated Posterior Variance (NIPV)~\citep{sambu2000GPRactiveselection}  criterion selects queries that minimize the expected posterior variance over a test distribution $p_*(\bm{x})$:
\begin{equation}
    \text{NIPV}(\bm{x}; p_*) = -\E_{\bm{x}_* \sim p_*} \left[ \sigma^2(\bm{x}_*) \;|\ \bm{x}, \mathcal{D} \right].
    \label{eq:NIPV}
\end{equation}

\paragraph{Expected Predictive Information Gain} The Expected Predictive Information Gain (EPIG)~\citep{pmlr-v206-bickfordsmith23a}
selects a candidate point $\bm{x}$ by maximizing the mutual information between its label
$y(\bm{x})$ and the label $y(\bm{x}^*)$ at a randomly drawn test input $\bm{x}_* \sim p_*(\bm{x})$,
\begin{equation}
\label{eq:epig-general}
\mathrm{EPIG}(\bm{x}; p_*)
=
\mathbb{E}_{\bm{x}_* \sim p_*}
\!\left[
I\!\big( y(\bm{x}) ; y(\bm{x}_*) \mid \mathcal{D} \big)
\right].
\end{equation}

which can equivalently be expressed in terms of an expectation over a difference of entropies. In its general, model-agnostic form, this constitutes of a nested expectation over both test inputs $\bm{x}_*$ and outcomes and selected locations $y(\bm{x})$:
\begin{equation}
\label{eq:epig-entropy}
\mathrm{EPIG}(\bm{x}; p_*)
=
\mathbb{E}_{\bm{x}_* \sim p_*}
\!\left[
\ent\!\left[y(\bm{x}_*) \mid \mathcal{D}\right]
-
\mathbb{E}_{y(\bm{x}) \mid \mathcal{D}}
\!\left[
\ent\!\left[y(\bm{x}_*) \mid \mathcal{D},\, \bm{x},\, y(\bm{x})\right]
\right]
\right].
\end{equation}

\paragraph{EPIG for Gaussian Processes}
Standard GP regression with Gaussian likelihoods constitutes a special case for EPIG, as the 
% posterior predictive distribution at
% $\bm{x}_*$ depends on the candidate point~$\bm{x}$ only through its location, but not its observed value. T
predictive variance (and therefore the predictive entropy) after conditioning on
$(\bm{x}, y(\bm{x}))$ is independent of the realized value $y(\bm{x})$.
Thus,
\[
\ent\!\left[y(\bm{x}_*) \mid \mathcal{D},\, \bm{x},\, y(\bm{x})\right]
=
\ent\!\left[y(\bm{x}_*) \mid \mathcal{D},\, \bm{x}\right],
\]
and the inner expectation over $y(\bm{x})$ in \eqref{eq:epig-entropy} disappears.
EPIG therefore simplifies to an expected reduction in predictive entropy:
\begin{equation}
\label{eq:epig-gp-single}
\mathrm{EPIG}_{\mathrm{GP}}(\bm{x}; p_*)
=
\mathbb{E}_{\bm{x}_* \sim p_*}
\!\left[
\ent\!\left[y(\bm{x}_*) \mid \mathcal{D}\right]
-
\ent\!\left[y(\bm{x}_*) \mid \mathcal{D},\, \bm{x}\right]
\right].
\end{equation}

% For a batch of query points $\setx = \{\bm{x}_1,\dots,\bm{x}_q\}$, the same reasoning yields
% \begin{equation}
% \label{eq:epig-gp-batch}
% \mathrm{EPIG}_{\mathrm{GP}}(\setx; p_*)
% =
% \mathbb{E}_{\bm{x}_* \sim p_*}
% \!\left[
% \ent\!\left[y(\bm{x}_*) \mid \mathcal{D}\right]
% -
% \ent\!\left[y(\bm{x}_*) \mid \mathcal{D},\, \setx\right]
% \right],
% \end{equation}
% where $\ent[y(\bm{x}_*) \mid \mathcal{D}, \setx]$ is the GP predictive entropy after conditioning on
% the batch inputs $\setx$. 
We will use this GP-specific form of EPIG throughout the remainder of this paper.
%--------------------------------

Both NIPV and EPIG promote the selection of data that reduces uncertainty over the test distribution, but without considering the effect on hyperparameter learning. The test distribution encodes how much emphasis is put on different parts of the domain. It can be specified by subject matter experts; in the case of no prior knowledge it is typically the uniform distribution. Throughout the remainder of the paper, we will exclusively consider NIPV with a uniform $p_*$.

%For a candidate batch of $q$ queries $\setx = [\bm{x}_1, \dots, \bm{x}_q]$, EPIG can be efficiently approximated via Monte Carlo sampling and optimized jointly over the batch:
%\begin{equation}
%    \argmax_{\setx \in \mathbb{R}^{q \times D}} \text{EPIG}(\setx; p_*) \approx \argmin_{\setx \in \mathbb{R}^{q \times D}} \sum_{\ell=1}^{L} \ent\left[y\left(\bm{x}_*^{(\ell)}\right) \mid \setx, \data\right].
%\end{equation}

%%%%%%%%%%%%%%%%%%%%%%%%%%%%%%%%%%%%%%%%%%%%%%%%%%%%%%%%%%%%
\section{Related Work}
\label{sec:related}

Initialization has received surprisingly little attention in the context of BO. 
\citet{10.1145/3377930.3390155} conducts a study on the effect of various random initial designs on BO performance. 
Alternatively, minimax or maximin criteria~\citep{JOHNSON1990131} may be used to accomplish evenly distributed designs. 
Maybe closest to our work is~\citep{zhang2021ddd}, which proposes LHS-Beta, an initial design criterion which alters samples drawn by LHS to achieve pairwise distances between points which matches a Beta distribution. LHS-Beta pursues diverse pairwise distances in the data, in order to best learn the lengthscale of a GP with an isotropic kernel. 
\citet{zimmerman2006optimal, muller1999optimal} address the problem of learning parameters of Kriging estimators using a empirical estimates of optimal experimental design criteria, limiting candidates to a fixed grid of points.

Entropy-maximizing~\citep{guestrin2005near, mackay1992information, MACKAY199573} or variance-minimizing~\citep{park2024activelearningpiecewisegaussian} designs have been explored in active learning for optimal sensor placement~\citep{krause2006near, JMLR:v9:krause08a} and other applications involving GPs, such as geostatistics~\citep{Sauer02012023} and contour finding \citep{Cole01012023}. 
Moreover, parameter-related EIG criteria are a bedrock of the broader topic of BED~\citep{pmlr-v206-bickfordsmith23a, rainforth2024modern, kirsch2021jepig, chaloner1995bed}, which focuses on selecting data that is most informative about model parameters or future predictions. These prediction or model-oriented criteria have yet to see widespread use in BO, particularly for initialization. 
However, information-theoretic acquisition functions~\citep{entropysearch, pes, wang2017maxvalue, moss2021gibbon, hvarfner2022joint, tu2022joint, pmlr-v139-neiswanger21a, 10.5555/3600270.3601798} address the optimization problem from an information theoretic perspective, albeit not with a primary focus on initialization or model predictive performance.

The problem of actively learning model hyperparameters during BO (post initialization) has previously been investigated by \citet{hvarfner2023selfcorrecting}, who propose a combined BO-BAL framework to actively learn the hyperparameters of the GP along with the optimum, and demonstrate that better-calibrated surrogates significantly enhance BO performance. 
\citet{houlsby2011bald} proposes BALD, an active learning acquisition functions for hyperparameters in preference learning in GPs. \citet{riis2022bayesian} proposes a a Query-By-Committee-oriented acquisition function for BAL in GPs. 
Lastly, \citet{berkenkamp2019noregret, NEURIPS2024_9cf904c8} address BO performance under hyperparameter uncertainty from a theoretical perspective, proving regret bounds when hyperparameters of the objective are unknown.

%%%%%%%%%%%%%%%%%%%%%%%%%%%%%%%%%%%%%%%%%%%%%%%%%%%%%%%%%%%%
\section{Method}
\label{sec:method}
We consider the case of large-batch, few-shot BO, where the batch size $q$ is large ($q \geq 8$), and only a small number of sequential batches $B$ can be evaluated (often only one for initialization and one for BO, so $B=2$). With only a handful of batches, there is little opportunity to correct for poor initialization. If the initial design fails to reduce uncertainty in key hyperparameters or leaves large regions of the input space unexplored, both hyperparameter inference and optimization become challenging.
This setting is common if evaluations are lengthy but can be effectively parallelized, which is the case for instance in online A/B tests or lab experiments for biology~\citet{pmlr-v162-stanton22a, gonzalez2024survey} or material design~\citet{yik2025battery}.

In practice, the objective(s) often exhibit complex structure such as moderate or high dimensionality~$D$ ($D>5$) and significant uncertainty in the lengthscales~$\bm{\ell}$, noise and signal variance~$\sigma^2_\varepsilon$ and~$\sigma^2_f$, respectively, and the importance of each input dimension. Standard space-filling designs, such as scrambled Sobol sequences, are widely used for initialization due to their theoretical uniformity properties. However, these methods are agnostic to the underlying model and its uncertainties. As a result, they often fail to uncover critical hyperparameter dependencies~\citep{zhang2021ddd}, leading to poorly calibrated surrogate models and suboptimal acquisition decisions in the subsequent BO phase.

While Sobol and other space-filling sequences are designed to uniformly cover the input space, this property alone does not imply that the resulting model will be desirably helpful for subsequent BO. In fact, space-filling is not synonymous with informativeness: a model trained on a space-filling design may still exhibit high predictive uncertainty in regions most relevant for optimization. For example, random designs increasingly sample near the boundaries of the search space as dimensionality grows~\cite{swersky2017improving, koppencurse}, whereas a point located at the center of the search space may be far more informative for optimization, as it can yield a greater reduction in predictive variance throughout the search space.

To this point, a strategy that explicitly reduces predictive uncertainty within the region of interest~\cite{pmlr-v206-bickfordsmith23a}---namely, the search space---not only seems better suited for our goals, but is also better aligned with both theoretical~\citep{srinivas2010ucb, berkenkamp2019noregret} and information gain-based criteria, as well as the practical objective of achieving a more informed, in an informal sense, model post-initialization~\cite{garnett_bayesoptbook_2023}. Methods like NIPV~\citep{zhang2021ddd} which minimize predictive variance over the input space achieve this goal better than Sobol, and yields  designs that are spread out but not necessarily informative for model calibration and hyperparameter learning.

On the other hand, methods that focus exclusively on hyperparameter informativeness, such as BALD~\citep{houlsby2011bald}, tend to cluster queries along specific axes in order to resolve lengthscales, thereby sacrificing coverage and potentially overlooking important regions of the search space. When additional hyperparameters are present, such as a learnable noise variance, BALD may even select duplicate queries at the same location to better estimate said noise variance. While such behavior is not inherently problematic, it can be undesirable in the few-shot setting, where initialization must address hyperparameter learning and space-filling simultaneously. 

To overcome these limitations, we introduce \algo, a method that explicitly balances predictive uncertainty reduction with hyperparameter-awareness. By jointly considering coverage and informativeness, \algo produces initial designs that effectively reduce both predictive and hyperparameter uncertainty, ensuring robust model calibration and improved downstream optimization.

The interplay between all these initialization strategies is illustrated in Figure~\ref{fig:acquisition-surfaces}, which visualizes the acquisition surfaces for a GP model in two dimensions under lengthscale uncertainty. Each subplot highlights the distinct behavior of a different method. Sobol emphasizes uniform coverage of the input space, but its designs may not achieve the desired reduction in predictive uncertainty, especially given the inherent randomness and lack of model awareness. BALD, in contrast, focuses on reducing lengthscale uncertainty by selecting axis-aligned queries, which can lead to clustering along specific directions and a loss of broader coverage. NIPV aims to minimize average predictive variance across the input space, resulting in well-spread points, but it can neglect hyperparameter informativeness since its queries are not tailored to resolve model parameter uncertainty. Our proposed method, \algo, strikes a balance between predictive uncertainty reduction and hyperparameter-awareness: it selects points that are both well-distributed and aligned with the axes of uncertainty, thereby achieving low predictive and hyperparameter uncertainty simultaneously.

\begin{figure}[htbp]
    \centering
    \begin{subfigure}[t]{0.282\textwidth}
        \includegraphics[width=\linewidth]{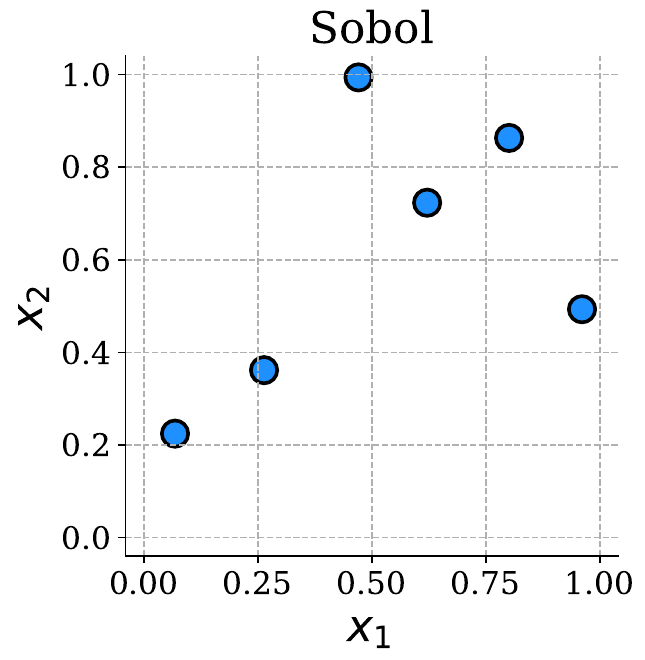}
    \end{subfigure}
    \hfill
    \begin{subfigure}[t]{0.23\textwidth}
        \includegraphics[width=\linewidth]{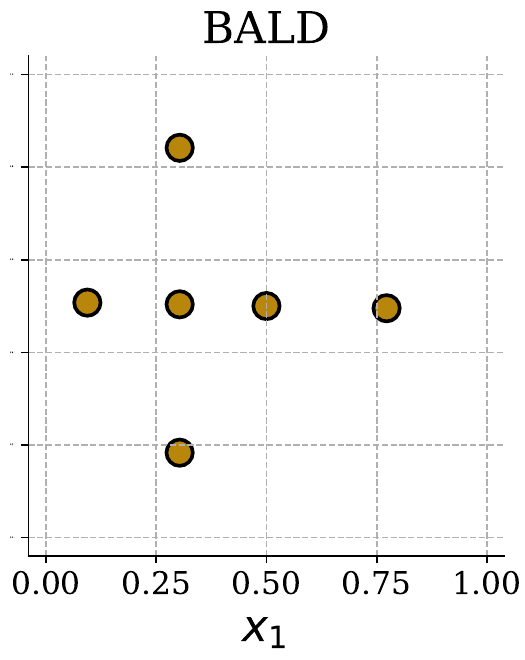}
    \end{subfigure}
    \hfill
    \begin{subfigure}[t]{0.23\textwidth}
        \includegraphics[width=\linewidth]{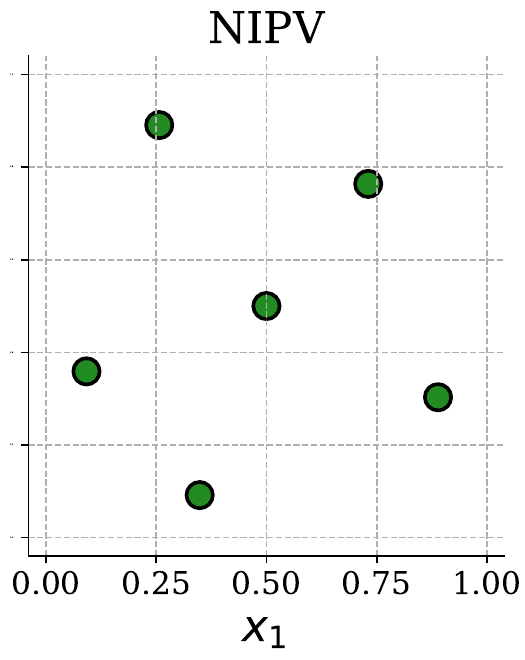}
    \end{subfigure}
    \hfill
    \begin{subfigure}[t]{0.23\textwidth}
        \includegraphics[width=\linewidth]{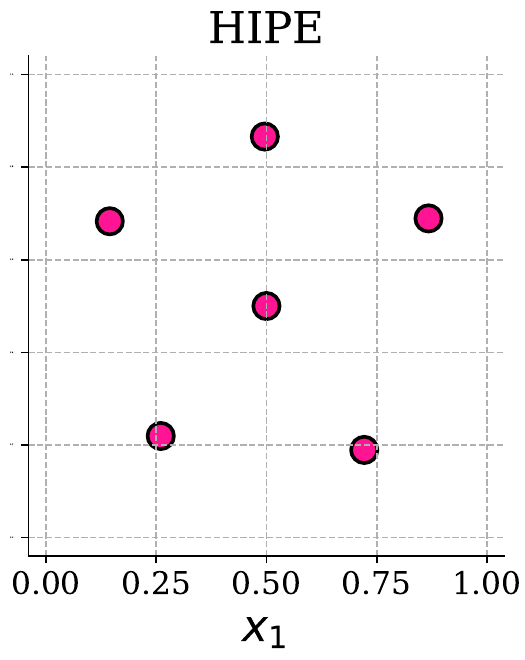}
    \end{subfigure}
    \caption{Visual comparison of initialization strategies for BO with a GP in two dimensions under lengthscale uncertainty. Each plot shows the acquisition surface used to guide batch selection. Sobol emphasizes space-filling, but may not accomplish this to the desired degree. BALD focuses on reducing lengthscale uncertainty via axis-aligned queries. NIPV spreads points to minimize average predictive variance over the input space. Lastly, \algo balances space-filling and hyperparameter-awareness, choosing spread-out points while preserving axis-alignment between queries.}
    \label{fig:acquisition-surfaces}
\end{figure}

%%%%%%%%%%%%%%%%%%%%%%%%%%%%%%%
\subsection{Hyperparameter-Informed Predictive Exploration}
\label{subsec:method:HIPE}

We approach the problem of initialization through the lens of optimization, by simultaneously maximizing the coverage of a region of interest and the information acquired about model-level uncertainty. A natural way to achieve this is to optimize a criterion that combines EPIG and BALD:
\begin{equation}
    \text{\algo}_\beta(\setx) := \underbrace{-\E_{\hps, y(\setx)}\left[\E_{\bm{x}_*}\left[\ent\left[y(\bm{x}_*) \mid \hps, y(\setx)\right]\right]\right]}_{\text{EPIG objective}} 
     +
     \beta\; \underbrace{\left(\ent [y(\setx)] - \E_{\hps}\left[\ent[y(\setx) \mid \hps]\right]\right)}_{\text{BALD objective}}
    % \text{\algo}_\beta(\setx) :=  {-\E_{y({\setx})}\left[\E_{\bm{x}_*}[\ent[y(\bm{x}_*)| \hps, y({\setx})]\right]} + \beta\; {\E_{\hps}[\ent[y({\setx})|\hps]- \ent[y({\setx})]]},
\label{eq:beta-HIPE}
\end{equation}
where $\beta > 0$ is a scalar weighting parameter. 
For large $\beta$, this objective favors hyperparameter learning, while for small $\beta$ it favors space-filling designs. 

It turns out that for the choice of $\beta=1$, the maximizer of Eq.~\eqref{eq:betaest} is exactly the maximizer of the joint information gain over test function values and model hyperparameters (for proof see Appendix~\ref{app:derivation}):
\begin{restatable}[Equivalence of $\algo_{\beta=1}$ to Joint Information Gain]{proposition}{betaequiv}
% \begin{proposition}[Equivalence of $\algo_{\beta=1}$ to Joint Information Gain]
    \label{prop:hipe_joint_ig}
    The $\algo_\beta$ acquisition function with $\beta = 1$ is equivalent to maximizing the expected joint information gain over test function values $y(\bm{x}*)$ and model hyperparameters $\hps$ acquired by a candidate batch $\setx$. 
    Formally,
    \begin{equation}
        \argmax_{\setx \in \mathbb{R}^{q \times D}} \; \algo_{\beta=1}(\setx; p_*) = \argmax_{\setx \in \mathbb{R}^{q \times D}} \;  \E_{\bm{x} \sim p_*} \left[ \eig\left(y(\bm{x}_*), \hps; \setx\right) \right].
    \end{equation}
% \end{proposition}
\end{restatable}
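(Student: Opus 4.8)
The plan is to recognize the joint expected information gain as a single mutual information and to split it with the chain rule so that the two resulting summands line up, term for term, with the EPIG and BALD components of $\algo_{\beta=1}$. Matching the definition $\eig(\xi; y(\bm{x})) = \ent[\xi] - \E_{y(\bm{x})}[\ent[\xi \mid y(\bm{x})]]$ with $\xi = (y(\bm{x}_*), \hps)$, I would first write, for a fixed test point $\bm{x}_*$,
\begin{equation}
    \eig(y(\bm{x}_*), \hps; \setx) = \ent[y(\bm{x}_*), \hps] - \E_{y(\setx)}\big[\ent[y(\bm{x}_*), \hps \mid y(\setx)]\big] = I\big(y(\setx);\, y(\bm{x}_*), \hps\big),
\end{equation}
i.e. the mutual information between the candidate observations $y(\setx)$ and the target pair. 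Because this is a mutual information it is symmetric, which (as noted after the \eig definition) lets me later select whichever entropy-difference form is most convenient.

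Next I would apply the chain rule for mutual information, $I(A; B, C) = I(A; C) + I(A; B \mid C)$, with $A = y(\setx)$, $B = y(\bm{x}_*)$, and $C = \hps$, to obtain
\begin{equation}
    I\big(y(\setx);\, y(\bm{x}_*), \hps\big) = I\big(y(\setx); \hps\big) + I\big(y(\setx); y(\bm{x}_*) \mid \hps\big).
\end{equation}
Using symmetry on the first summand and writing it as an entropy reduction over $y(\setx)$ gives $I(y(\setx); \hps) = \ent[y(\setx)] - \E_\hps[\ent[y(\setx) \mid \hps]]$, which is exactly the BALD objective in Eq.~\eqref{eq:beta-HIPE}; crucially, this quantity does not depend on the test location $\bm{x}_*$.

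I would then expand the conditional mutual information as
\begin{equation}
    I\big(y(\setx); y(\bm{x}_*) \mid \hps\big) = \E_\hps\big[\ent[y(\bm{x}_*) \mid \hps]\big] - \E_{\hps, y(\setx)}\big[\ent[y(\bm{x}_*) \mid \hps, y(\setx)]\big],
\end{equation}
noting that the first term is the hyperparameter-marginalized predictive entropy at $\bm{x}_*$ \emph{before} the batch is observed and is therefore constant in $\setx$, while the negation of the second term is precisely the integrand of the EPIG objective. Taking the expectation over $\bm{x}_* \sim p_*$ and using that the BALD summand is invariant to $\bm{x}_*$ (and reordering expectations by Fubini to match the EPIG form) yields
\begin{equation}
    \E_{\bm{x}_* \sim p_*}\big[\eig(y(\bm{x}_*), \hps; \setx)\big] = \algo_{\beta=1}(\setx; p_*) + C, \qquad C = \E_{\bm{x}_* \sim p_*}\big[\E_\hps[\ent[y(\bm{x}_*) \mid \hps]]\big].
\end{equation}
Since $C$ is independent of $\setx$, the two objectives differ only by an additive constant and hence share the same maximizer, which is the claim.

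The step I expect to require the most care is justifying that $\E_\hps[\ent[y(\bm{x}_*) \mid \hps]]$ is genuinely independent of the batch $\setx$: it is the entropy of the predictive at $\bm{x}_*$ conditioned on the hyperparameters (and the existing data) \emph{without} incorporating the candidate observations, so it carries no dependence on $\setx$. I would also want to confirm that the chain rule is applied in the direction that places $\hps$ in the conditioning slot of the conditional mutual information, as this is what makes the BALD term separate out cleanly rather than becoming entangled with the predictive term. The remaining manipulations — rewriting each mutual information in its entropy-difference form and interchanging the expectations over $\bm{x}_*$, $\hps$, and $y(\setx)$ — are routine.
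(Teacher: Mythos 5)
Your argument is correct and is essentially the paper's own proof run in the opposite direction: the paper starts from $\algo_{\beta=1}$, uses the symmetry of mutual information to rewrite the BALD term as a reduction in hyperparameter entropy, and then assembles the joint entropy $\ent[y(\bm{x}_*),\hps \mid y(\setx)] = \ent[\hps\mid y(\setx)] + \ent[y(\bm{x}_*)\mid\hps, y(\setx)]$ to arrive at the joint \eig, whereas you start from the joint \eig and split it with the chain rule for mutual information into the BALD term plus the conditional predictive term. The two derivations use identical ingredients (chain rule, MI symmetry, discarding $\setx$-independent constants), and your explicit identification of the additive constant $C=\E_{\bm{x}_*}[\E_\hps[\ent[y(\bm{x}_*)\mid\hps]]]$ agrees with the constants the paper implicitly drops.
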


While Proposition~\ref{prop:hipe_joint_ig} provides an intuitive connection between \algo, EPIG, and BALD, the constituent quantities often have vastly different scales. 
Therefore the choice of $\beta=1$ will generally not result in optimal performance since the optimization will inadvertently focus on the larger of the two terms.

A key observation is that not all hyperparameter information gain amounts to information gained on the test set -- this depends on multiple aspects, including downstream test distribution and hyper-parameterization. 
To what extent the reduction in hyperparameter entropy \textit{manifests in a reduction in test set entropy} can be quantified through the mutual information between the hyperparameters $\hps$ and the test points $y(\bm{x}_*),\; \bm{x}_*\sim p_*(\bm{x})$:
\begin{equation}\label{eq:betaest}
    \beta = \eig(y(\bm{x}_*);\hps| \data) =\E_{\bm{x}_*}\left[\ent[y(\bm{x}_*)|\data] -\E_{\hps}[\ent[y(\bm{x}_*)|\hps, \data]]\right].
\end{equation}
Intuitively, Eq.~\eqref{eq:betaest} quantifies how well the knowledge of the hyperparameters, in expectation, informs us about the values of $y(\bm{x}_*)$. 
Importantly, $\eig(y(\bm{x}_*);\hps| \data)$ does not depend on the candidate set~$\setx$ and can thus be pre-computed. 

Setting $\beta=\eig(y(\bm{x}_*);\hps |\data)$ balances the two competing objectives in Eq.~\eqref{eq:beta-HIPE} according to their effect on downstream predictive uncertainty, without introducing any additional hyperparameters. 
We refer to the resulting acquisition function as Hyperparameter-Informed Predictive Exploration (\algo):
\begin{equation}
    \text{\algo}(\setx) :=  {-\E_{y({\setx})}\left[\E_{\bm{x}_*}[\ent[y(\bm{x}_*)| \hps, y({\setx})]\right]} + \eig(y(\bm{x}_*);\hps)\; {\E_{\hps}[\ent[y({\setx})|\hps]- \ent[y({\setx})]]}
\label{eq:HIPE}
\end{equation}
Notably, optimizing \algo does not require having observed \emph{any} data -- for this problem to be well-posed only requires test distribution, model structure, and model parameter hyperpriors.

%%%%%%%%%%%%%%%%%%%%%%%%%%%%%%%
\subsection{A Parallel Monte Carlo Implementation of \algo}
\label{subsec:method:parallel}
Following the Monte Carlo approach used in modern acquisition functions~\citep{wilson2020efficiently, balandat2020botorch}, we implement a parallel version of \algo that enables joint optimization. 
For a candidate batch $\setx = \{\bm{x}_1, \dots, \bm{x}_q\}$, we estimate the acquisition objective using $M$ MC samples over the hyperparameters, $T$ test locations of the EPIG objective, and $N$ samples from the predictive posterior for the BALD objective:
\begin{subequations}
    \begin{align}
        \alpha_\text{BALD}(\setx; \hps) &= - \log p \left( \!\mathbb{E}_{\hps}[y(\setx) | \hps)]\!\right) + \mathbb{E}_{\hps} \left[ \log p(y(\setx) \mid \hps)\right]
        \\
        &\approx
        -\frac{1}{N} \sum_{n=1}^{N}\! \left[ \log \left( \frac{1}{M}\! \sum_{m=1}^{M} p\left( Y^{(n)} \mid \theta^{(m)} \right) \right) +   \frac{1}{M}\!\sum_{m=1}^{M}\log p\left(\! Y^{(n)} \mid \theta^{(m)} \!\right) \right],
    \end{align}
\end{subequations}
where $\hps^{(m)}\sim p(\hps\mid \data)$ are i.i.d samples from the belief over hyperparameters, and $Y^{(n)}\sim y(\setx)$ are i.i.d,  $q$-dimensional (joint) samples from the predictive posterior. 
Thus, the BALD objective amounts to repeated evaluation the $q$-dimensional multivariate normal posterior $y(\setx)$ for each candidate $\setx$, and estimating the posterior entropy from it. 
Secondly, we estimate the EPIG objective analogously to \citet{balandat2020botorch} as
\begin{equation}\label{eq:epigmc}
    \alpha_\text{EPIG}(\setx; p_*, \hps) \approx \frac{1}{M} \frac{1}{T} \sum_{m=1}^{M}\sum_{t=1}^{T} \left[C - \ent[y(\bm{x}_*^{(t)}) | \setx, \hps^{(m)}, \mathcal{D}] \right],
\end{equation}
where $C = \ent[y(\bm{x}_*^{(t)}) | \mathcal{D}]$ is constant w.r.t. $\setx$ and does not need to be computed. 
Since $y(\bm{x}_*^{(t)}) | \setx, \hps^{(m)}, \mathcal{D}$ in Eq.~\eqref{eq:epigmc} is the GP posterior predictive at $\bm{x}_*$, it is a Gaussian random variable and its entropy can be computed in closed form without observing or simulating the outcome at $\bm{x}_*$. Thus, Eq.~\eqref{eq:epigmc} can be evaluated in closed form for each of the $M$ GPs' $T$ test points after conditioning on the set $\setx$.  
%, as the entropy of a Gaussian only depends on the (co-)variance at the given location and is thus easily computed.

With these two MC estimators, the subsequent optimization can be carried out jointly for the entire $q$-batch in a $qD$-dimensional space, as the entropy reduction is computed with regard to the entire batch of candidates as opposed to a singular one. Using Sample Average Approximation \citep{balandat2020botorch}, the \algo objective is deterministic and auto-differentiable. 

One downside of this formulation is that the nested MC estimator imposes substantial computational runtime, making \algo less suited for high-throughput applications \citep{eriksson2019turbo, maus2022lolbo, pmlr-v162-daulton22a}. 
However, in those applications the quality of the initialization batch is generally much less crucial, so this is not a limitation in practice.

%%%%%%%%%%%%%%%%%%%%%%%%%%%%%%%%%%%%%%%%%%%%%%%%%%%%%%%%%%%%
\section{Results}
\label{sec:results}
We evaluate \algo across two main types of tasks: Active Learning (AL) and Bayesian Optimization (BO). 
We consider various synthetic and real-world problems and a number of different baselines. 
In both settings, we maintain large batch sizes and few batches.

\paragraph{Setup} For AL, we simply run a number of batches with \algo and the other baselines with the goal of achieving the best model fit. For BO, we consider the ``two-shot'' setting in which we first have to select an initialization batch and then can perform a single iteration of (batch) BO using \texttt{qLogNEI}~\citep{ament2023unexpected} as the acquisition function. 
We benchmark against the conventional initializations Sobol and Random Search, as well as BALD, NIPV and LHS-Beta~\citep{zhang2021ddd}. 
On all tasks, we utilize a fully Bayesian GP~\citep{spearmint, eriksson2021high} using MCMC with NUTS~\citep{JMLR:v15:hoffman14a} in Pyro \citep{bingham2018pyro}. 
We implement \algo and all baselines in BoTorch \citep{balandat2020botorch}. 
For all experiments, the hyperparameter set $\hps$ consists of lengthscales~$\bm{\ell}$ for each dimension with a $D$-scaled prior~\citep{hvarfner2024vanilla}, a constant mean~$c$, and an inferred noise variance~$\sigma_\varepsilon^2$ unless otherwise specified. All baselines, including random algorithms, are given the center of the search space as part of their initial design, as both \algo and NIPV select the center of the search space by design under a uniform $p_*$. Complete details on the experimental setup, including code pointers and benchmarks can be found in Appendix~\ref{app:exp_setup}. 

\paragraph{Evaluation Criteria} 
We measure model fit quality with Root Mean Square Error (RMSE) of the mean prediction and Negative Log-Likelihood (NLL) against a large number of (ground truth) test point sampled uniformly from the domain. 
In the ``two-shot'' optimization setting, we are interested in how the initialization affects the quality of the GP surrogate after both the first (initialization) and second (BO) batch. We compute relative rankings, the performance of each algorithm compared to its competitors, for each seed of each function, and average across the task type. As such, the relative rankings aggregate inter-algorithm performance across rows for Figs~\ref{fig:al_mll}-~\ref{fig:lcbench_regret}.

We also study how these model quality improvements translate to better optimization performance. 
To this end, we consider the \emph{out-of-sample inference performance} \citep{pes,hvarfner2022joint}, that is, the performance of the point $\bm{x}' = \argmax \mu(\bm{x}\mid \data)$ selected as the maximizer of the posterior mean of the surrogate model fit on the data available in each batch. We choose this metric since using observed points directly performs very poorly in noisy settings, and only considering in-sample points is rather limiting in the few-shot setting.

%%%%%%%%%%%%%%%%%%%%%%%%%%%%%%%
\subsection{Batch Active Learning on GP Surrogates and Synthetic Functions}
\label{subsec:results:BAL_GP}

We first evaluate the ability of \algo to learn accurate surrogate models through batch active learning on noisy synthetic test functions and surrogate LCBench~\citep{ZimLin2021a} tasks. 
The LCBench tasks are derived from complete neural network training runs on various OpenML~\citep{vanschoren2014openml} datasets, with 7D GP surrogate models fitted as described in Appendix~\ref{app:exp_setup}. 
Additionally, we evaluate performance on the Hartmann $6$D function and a high-dimensional Hartmann $6$D ($12$D) variant, where dummy input dimensions are added following standard practice in high-dimensional BO~\citep{eriksson2021high}. 
These dummy dimensions introduce an additional challenge, as effectively identifying and ignoring irrelevant features is critical for accurate predictions. All evaluations are subject to substantial observation noise, detailed in Appendix~\ref{app:exp_setup}.

We run each algorithm for $4$ batches of size $q=16$ and measure the NLL and RMSE after each batch, displaying mean and one standard error on all tasks. 
In Fig.~\ref{fig:al_mll} we see that, across all tasks, \algo is the only method that consistently ranks in the top two for both RMSE and NLL, demonstrating that the models it produces are both accurate and well-calibrated. 
On NLL, \algo performs comparably to BALD, which targets hyperparameter learning and thus excels at model calibration. 
Similarly, \algo is competitive with NIPV on RMSE, a metric for which NIPV is particularly well-suited~\citep{gramacy2009adaptive}.

\begin{figure}[htbp]
    \centering
    \includegraphics[width=0.9\linewidth]{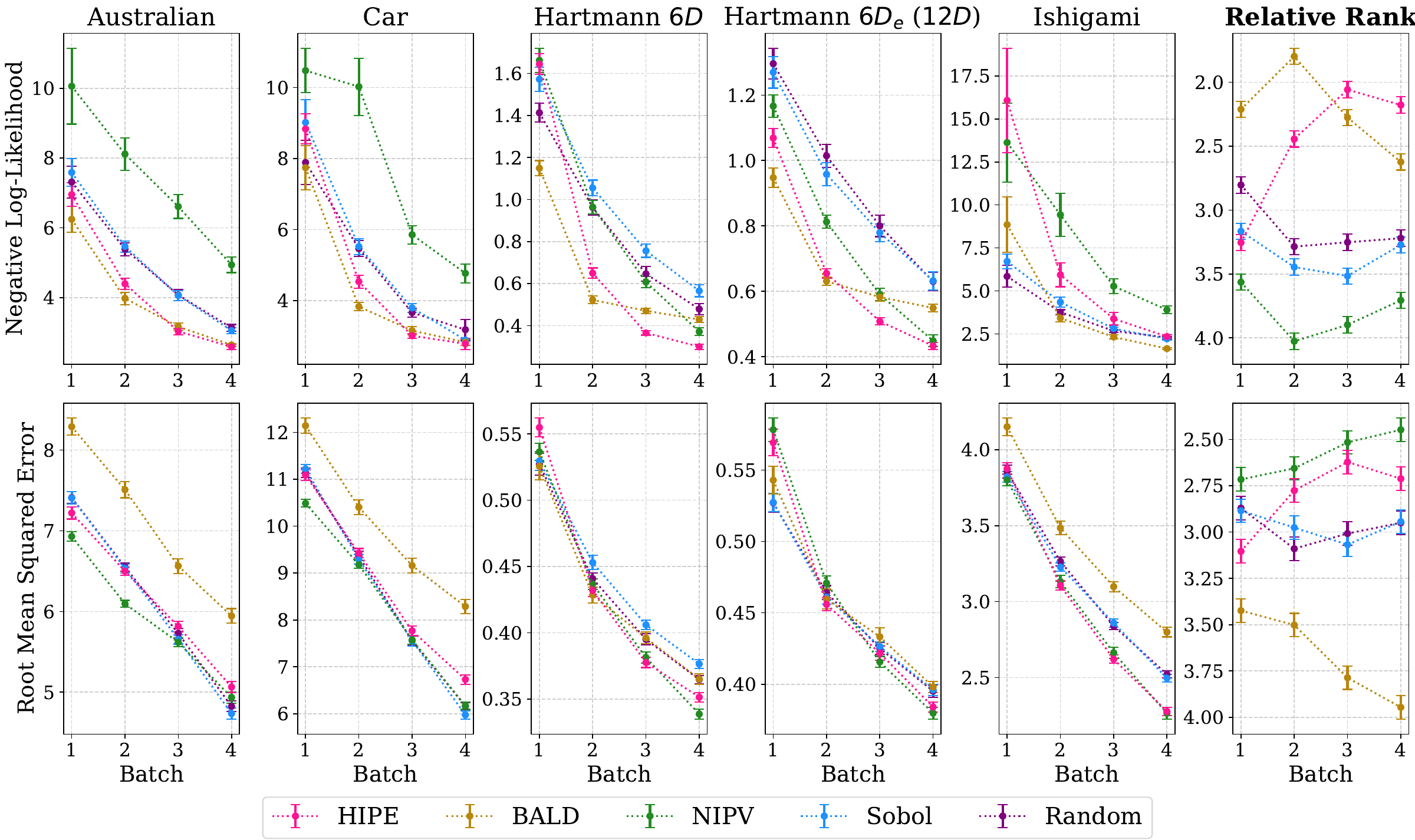}
    \caption{
    Model accuracy results in the batch active learning setting. We report RMSE across various synthetic and LCBench surrogate tasks over 4 batches of $q=16$ evaluations across $100$ seeds.
    \algo consistently ranks in the top two in relative rankings on both metrics, achieving a strong balance between hyperparameter learning and predictive accuracy. BALD performs competitively on marginal log-likelihood (MLL) but underperforms on RMSE due to poor space-filling, while NIPV excels at reducing RMSE but struggles with calibration. On aggregate, random initialization methods lag significantly behind across all benchmarks.}
    \label{fig:al_mll}
\end{figure}

%%%%%%%%%%%%%%%%%%%%%%%%%%%%%%%
\subsection{Noisy Synthetic Test Functions}
\label{subsec:results:noisy_synthetic}

Next, we evaluate \algo on synthetic benchmark functions in the two-shot Bayesian optimization setting, using $B=2$ batches and a batch size of $q=24$. 
We consider three standard test functions—Ackley ($4$D), Hartmann ($4$D), and Hartmann ($6$D)—as well as two higher-dimensional variants of the Hartmann function. 
Observation noise is added to all tasks ($\sigma_\varepsilon=2$ for Ackley and $\sigma_\varepsilon=0.5$ for Hartmann), further increasing task difficulty. 
In Fig.~\ref{fig:syn_regret} shows that across all benchmarks, \algo consistently achieves the best or second-best performance, followed by NIPV. 
Random and space-filling initialization methods (LHS-Beta, Random, and Sobol) perform noticeably worse across all settings. In App.~\ref{app:runtime}, we show the runtime of HIPE and various other methods on Hartmann (6D). The extra cost of HIPE acquisition is a minority of iteration runtime for a fully Bayesian model.

\begin{figure}[htbp]
    \centering
    \includegraphics[width=\linewidth]{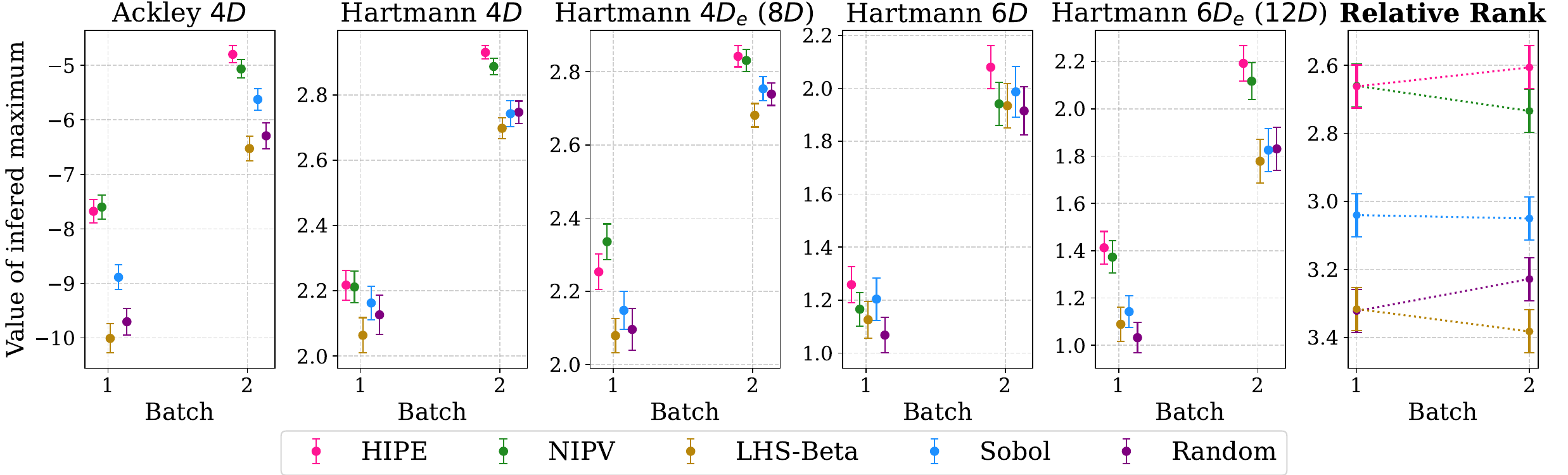}
    \caption{
    Out-of-sample inference optimization performance on noisy synthetic benchmark functions under the two-shot setting ($B=2$, $q=24$) across $100$ seeds per benchmark.
    \algo outperforms or matches the best-performing method across all benchmarks, including on the high-dimensional Hartmann variants with added dummy variables. NIPV performs well on most tasks, but its performance degrades on tasks where hyperparameter identification is critical. Random initialization strategies perform the worst throughout.}
    \label{fig:syn_regret}
\end{figure}

\subsection{LCBench HPO Tasks}\label{subsec:lcbench}
We evaluate five additional tasks from LCBench in the two-shot optimization setting: Fashion-MNIST, MiniBooNE, Car, Higgs, and Segment, using fixed, minimal observation noise. 
Fig.~\ref{fig:lcbench_regret} displays that \algo substantially outperforms the competition on Fashion-MNIST, Mini-BooNE and Higgs, and performs competitively on the remaining Segment and Car. Overall, \algo consistently delivers the highest relative rank, outperforming competing algorithms by a substantial margin. In App.~\ref{fig:app_q8_bo}, we demonstrate the performance of the same initialization schemes on $q=8$, $B=5$. In this setting, \algo remains the top-performing method, and the overall ranking of algorithms remains intact.

\begin{figure}[htbp]
    \centering
    \includegraphics[width=0.95\linewidth]{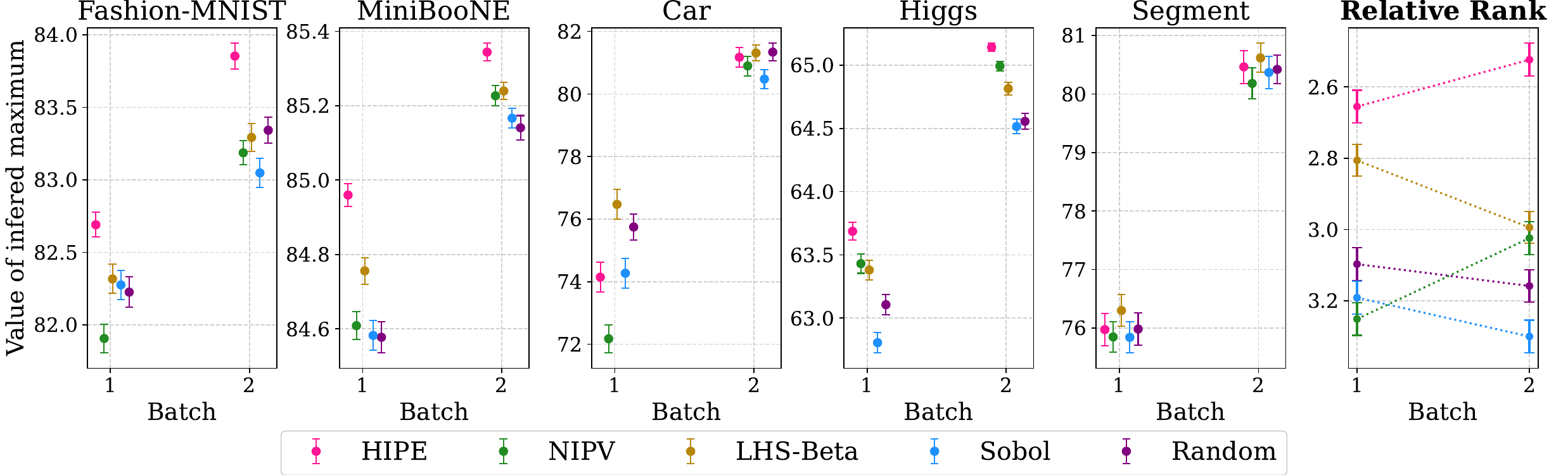}
    \caption{
    Two-shot optimization results on five hyperparameter optimization tasks from LCBench across 200 seeds for each task. 
    \algo achieves the highest final performance on four of the five tasks and remains competitive on the remaining one (Segment). 
    Relative rankings across the five problems show that \algo consistently outperforms other methods. This confirms the practical relevance of our approach for real-world HPO scenarios, where both accurate predictive modeling and effective exploration are crucial.}
    \label{fig:lcbench_regret}
\end{figure}

\subsection{High-Dimensional SVM Tasks}
Finally, we evaluate \algo and baseline methods on challenging high-dimensional SVM hyperparameter optimization tasks with $D=20$ and $D=40$ input dimensions, considered in similar variants by \citet{ament2023unexpected, eriksson2021high, papenmeier2023bounce, hvarfner2024vanilla}. 
For both tasks, only the last two dimensions—corresponding to the SVM's global regularization parameters—significantly influence the objective, while the remaining dimensions, corresponding to feature-specific lengthscales, are of lesser importance. 
Effectively identifying and prioritizing these relevant dimensions is critical for successful optimization. 
We again consider the two-shot setting, using a larger batch size of $q=32$ due to the higher dimensionality of the problems.

The left panel of Fig.~\ref{fig:svm_results} reports the out-of sample inference performance after each batch. 
On the $20$D task, \algo achieves competitive, mid-range performance relative to the evaluated methods. 
On the more challenging $40$D task, \algo obtains the highest performance, albeit by a narrow margin over the next-best alternatives. 
The limited budget relative to the dimensionality presents a significant challenge, as the ability to accurately learn the model hyperparameters diminishes with increasing dimensionality. 
Despite this, we observe in the right panel of Fig.~\ref{fig:svm_results} that \algo identifies the important hyperparameters remarkably well after initialization on the $40$D task—assigning the last two dimensions lengthscales that are, on average, nearly half an order of magnitude smaller than those inferred under a Sobol initialization. In Fig.\ref{fig:svm_app_hypers}, we display the inferred hyperparameter values for NIPV and Random as well. While these methods infer lower lengthscales for the last two dimensions than Sobol, they do not manage to infer as low values as~\algo.

Finally, we note that the best solutions to the SVM problem were almost exclusively located near the boundaries of the search space—particularly in the last two dimensions, which neither NIPV nor \algo naturally explore during initialization.

\begin{figure}[htbp]
    \centering
    \includegraphics[width=0.47\linewidth]{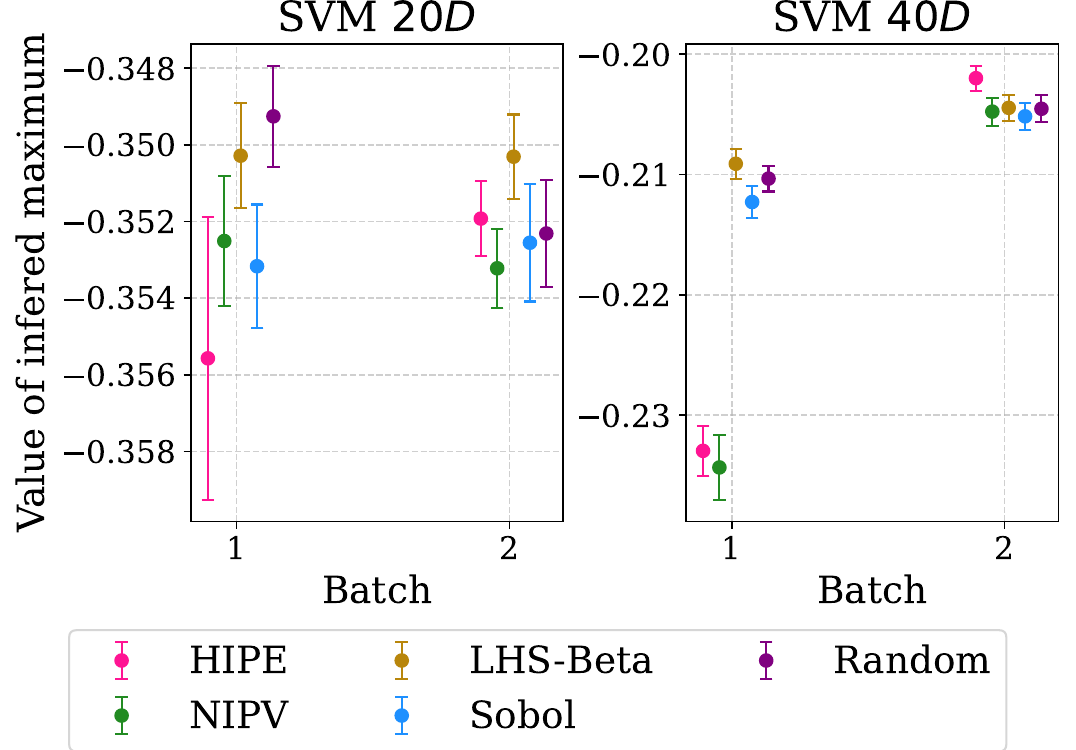}\hfill
    \includegraphics[width=0.52\linewidth]{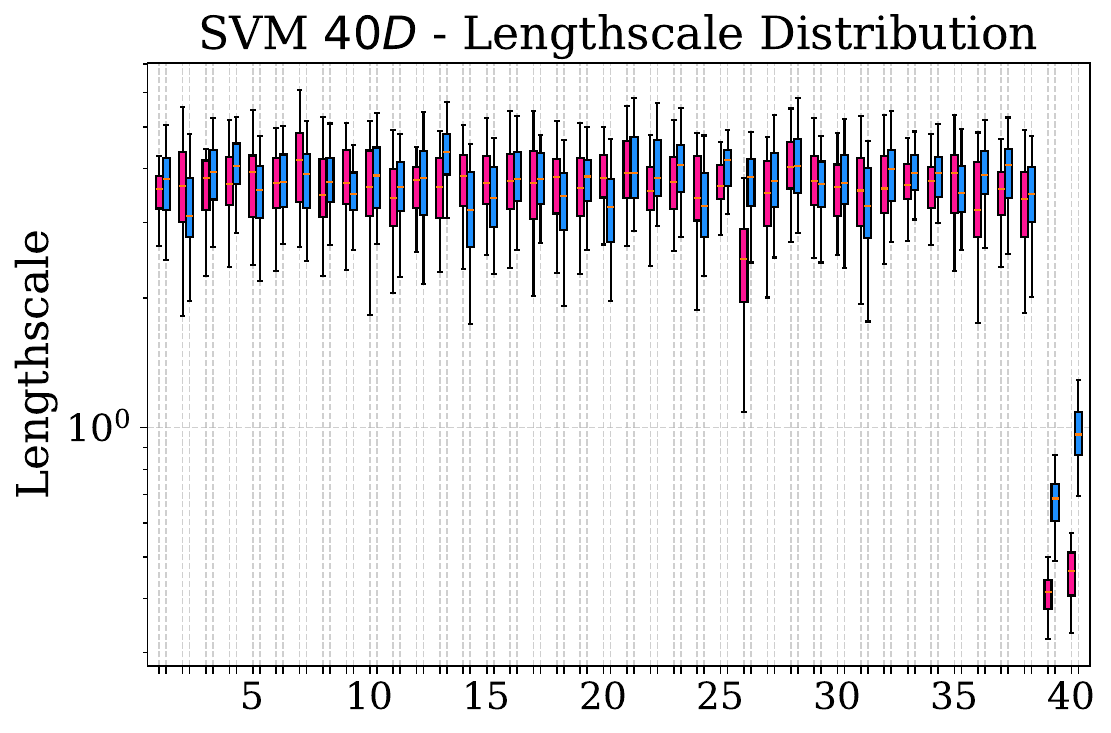}
    \caption{\textbf{Results on $\mathbf{20}$D and $\mathbf{40}$D SVM hyperparameter optimization tasks.}
    \textbf{Left:} Objective value of inferred maximum after each batch. On $20$D, \algo achieves  average performance, climbing to a mid-tier result in the second batch after focusing on hyperparameter learning in the first batch. \algo obtains a large standard error in the first batch, as two repetitions poorly infers the maximizer and suggests ill-performing points as a result. On $40$D, it outperforms all baselines, demonstrating strong robustness in higher dimensions and the ability to recover after a less successful first batch. Notably, the performance of Random decreases between batches on the 20D task, demonstrating the difficulty of accurately inferring the optimum.
    \textbf{Right:} Log-mean estimated lengthscale hyperparameters \textit{after initialization} on the $40$D task. \algo identifies the last two dimensions--corresponding to the SVM's global regularization parameters -- much more effectively than Sobol, assigning substantially smaller lengthscales to the relevant inputs.}
    \label{fig:svm_results}
\end{figure}

%%%%%%%%%%%%%%%%%%%%%%%%%%%%%%%%%%%%%%%%%%%%%%%%%%%%%%%%%%%%
\section{Discussion}
\label{sec:discussion}

\paragraph{Contributions} We introduced \algo, a principled, hyperparameter-free information-theoretic method for initializing Bayesian Optimization and Bayesian Active Learning algorithms. 
its \algo yields initial designs that balance coverage of (relevant) areas of the domain with the ability to effectively learn model hyperparameters. 
\algo is especially useful in the few-shot, large-batch setting, where it achieves superior surrogate model quality compared to various initialization baselines.

\paragraph{Limitations} \algo can become computationally expensive, especially for large batch sizes $q$ in higher dimensions $D$, though in many applications this cost is still insignificant compared to the time and resources required to evaluate the underlying black-box function. 
%Furthermore, as \algo tends to place fewer points along the boundary of the domain, which in the context of BO, perform as well as some alternatives in settings where the optimum happens to lie on or near the boundary. 
Finally, the current paper focuses on GP surrogates, and while our main insights and the general approach apply also to other surrogate types, our implementation does not translate directly. Thus, the computation of the acquisition function would have to be re-derived for other types of Bayesian models.

\paragraph{Future work} Here we studied the ``cold start'' problem of initializing Bayesian Optimization and Bayesian Active Learning from scratch. 
In practice, we may have access to data from related (but not necessarily identical) problems. 
This motivates an extension of \algo to the transfer learning setting, e.g., by means of using a multi-task GP surrogate. 
Additionally, incorporating prior knowledge of domain experts in a principled fashion is of high practical relevance, and can readily  be utilized in the form of a non-uniform $p_*$.
Finally, we are also interested in studying the multi-objective setting in which different surrogates of potentially different form with different hyperparameters and priors model different objectives but share observations at the same input locations.  

\newpage

%%%%%%%%%%%%%%%%%%%%%%%%%%%%%%%%%%%%%%%%%%%%%%%%%%%%%%%%%%%%
\bibliographystyle{plainnat}
\bibliography{bib/local}

@inproceedings{svr,
author = {Drucker, Harris and Burges, Chris J. C. and Kaufman, Linda and Smola, Alex and Vapnik, Vladimir},
title = {Support vector regression machines},
year = {1996},
abstract = {A new regression technique based on Vapnik's concept of support vectors is introduced. We compare support vector regression (SVR) with a committee regression technique (bagging) based on regression trees and ridge regression done in feature space. On the basis of these experiments, it is expected that SVR will have advantages in high dimensionality space because SVR optimization does not depend on the dimensionality of the input space.},
booktitle = {Proceedings of the 10th International Conference on Neural Information Processing Systems},
pages = {155–161},
numpages = {7},
location = {Denver, Colorado},
series = {NIPS'96}
}

@book{owen2023practicalqmc,
   author = {Art B. Owen},
   year = 2023,
   title = {Practical Quasi-Monte Carlo Integration},
   publisher = {\url{https://artowen.su.domains/mc/practicalqmc.pdf}}
}

@article{ament2023sustainableconcretebayesianoptimization,
    title={Sustainable Concrete via Bayesian Optimization}, 
    author={Sebastian Ament and Andrew Witte and Nishant Garg and Julius Kusuma},
    year={2023},
    journal={NeurIPS 2023 Workshop on Adaptive Experimentation in the Real World},
    primaryClass={cs.LG},
}

@InProceedings{pmlr-v32-snoek14,
  title = 	 {Input Warping for Bayesian Optimization of Non-Stationary Functions},
  author = 	 {Snoek, J. and Swersky, K. and Zemel, R. and Adams, R.},
  booktitle = 	 {Proceedings of the 31st International Conference on Machine Learning},
  year = 	 {2014},
  volume = 	 {32},
  series = 	 {Proceedings of Machine Learning Research},
}

@InProceedings{pmlr-v216-foldager23a,
  title = 	 {On the role of model uncertainties in {B}ayesian optimisation},
  author =       {Foldager, Jonathan and Jordahn, Mikkel and Hansen, Lars K. and Andersen, Michael R.},
  booktitle = 	 {Proceedings of the Thirty-Ninth Conference on Uncertainty in Artificial Intelligence},
  pages = 	 {592--601},
  year = 	 {2023},
  volume = 	 {216},
  series = 	 {Proceedings of Machine Learning Research},
  publisher =    {PMLR},
}

@misc{ren2024optimalinitializationbatchbayesian,
      title={Optimal Initialization of Batch Bayesian Optimization}, 
      author={Jiuge Ren and David Sweet},
      year={2024},
      eprint={2404.17997},
      archivePrefix={arXiv},
      primaryClass={cs.LG},
      url={https://arxiv.org/abs/2404.17997}, 
}

@article{gonzalez2024survey,
  title={A survey and benchmark of high-dimensional Bayesian optimization of discrete sequences},
  author={Gonz{\'a}lez-Duque, Miguel and Michael, Richard and Bartels, Simon and Zainchkovskyy, Yevgen and Hauberg, S{\o}ren and Boomsma, Wouter},
  journal={Advances in Neural Information Processing Systems},
  volume={37},
  pages={140478--140508},
  year={2024}
}

@article{gramacy2009adaptive,
  title={Adaptive design and analysis of supercomputer experiments},
  author={Gramacy, Robert B and Lee, Herbert KH},
  journal={Technometrics},
  volume={51},
  number={2},
  pages={130--145},
  year={2009},
  publisher={Taylor \& Francis}
}

@inproceedings{Feurer2015InitializingBH,
  title={Initializing Bayesian Hyperparameter Optimization via Meta-Learning},
  author={M. Feurer and Jost Tobias Springenberg and F. Hutter},
    booktitle = {Proceedings of the Twenty-Ninth {AAAI} Conference on Artificial Intelligence},
  pages     = {1128--1135},
  year={2015}
}

@article{vanschoren2014openml,
  title={OpenML: networked science in machine learning},
  author={Vanschoren, Joaquin and Van Rijn, Jan N and Bischl, Bernd and Torgo, Luis},
  journal={ACM SIGKDD Explorations Newsletter},
  volume={15},
  number={2},
  pages={49--60},
  year={2014},
  publisher={ACM New York, NY, USA}
}

@article{attia2020closed,
  title={Closed-loop optimization of fast-charging protocols for batteries with machine learning},
  author={Attia, Peter M and Grover, Aditya and Jin, Norman and Severson, Kristen A and Markov, Todor M and Liao, Yang-Hung and Chen, Michael H and Cheong, Bryan and Perkins, Nicholas and Yang, Zi and others},
  journal={Nature},
  volume={578},
  number={7795},
  pages={397--402},
  year={2020},
  publisher={Nature Publishing Group UK London}
}

@article{JMLR:v12:bull11a,
  author  = {A. D. Bull},
  title   = {Convergence Rates of Efficient Global Optimization Algorithms},
  journal = {Journal of Machine Learning Research},
  year    = {2011},
  volume  = {12},
  number  = {88},
  pages   = {2879-2904},
}

@InProceedings{pmlr-v162-stanton22a,
  title = 	 {Accelerating {B}ayesian Optimization for Biological Sequence Design with Denoising Autoencoders},
  author =       {Stanton, Samuel and Maddox, Wesley and Gruver, Nate and Maffettone, Phillip and Delaney, Emily and Greenside, Peyton and Wilson, Andrew Gordon},
  booktitle = 	 {Proceedings of the 39th International Conference on Machine Learning},
  pages = 	 {20459--20478},
  year = 	 {2022},
  volume = 	 {162},
  series = 	 {Proceedings of Machine Learning Research},
  publisher =    {PMLR},
  abstract = 	 {Bayesian optimization (BayesOpt) is a gold standard for query-efficient continuous optimization. However, its adoption for drug design has been hindered by the discrete, high-dimensional nature of the decision variables. We develop a new approach (LaMBO) which jointly trains a denoising autoencoder with a discriminative multi-task Gaussian process head, allowing gradient-based optimization of multi-objective acquisition functions in the latent space of the autoencoder. These acquisition functions allow LaMBO to balance the explore-exploit tradeoff over multiple design rounds, and to balance objective tradeoffs by optimizing sequences at many different points on the Pareto frontier. We evaluate LaMBO on two small-molecule design tasks, and introduce new tasks optimizing in silico and in vitro properties of large-molecule fluorescent proteins. In our experiments LaMBO outperforms genetic optimizers and does not require a large pretraining corpus, demonstrating that BayesOpt is practical and effective for biological sequence design.}
}

@article{berkenkamp2019noregret,
  author  = {Felix Berkenkamp and Angela P. Schoellig and Andreas Krause},
  title   = {No-Regret Bayesian Optimization with Unknown Hyperparameters},
  journal = {Journal of Machine Learning Research},
  year    = {2019},
  volume  = {20},
  number  = {50},
  pages   = {1--24},
}

@article{kirsch2019batchbald,
  title={Batchbald: Efficient and diverse batch acquisition for deep bayesian active learning},
  author={Kirsch, Andreas and Van Amersfoort, Joost and Gal, Yarin},
  journal={Advances in neural information processing systems},
  volume={32},
  year={2019}
}

@InProceedings{pmlr-v139-neiswanger21a,
  title = 	 {Bayesian Algorithm Execution: Estimating Computable Properties of Black-box Functions Using Mutual Information},
  author =       {Neiswanger, Willie and Wang, Ke Alexander and Ermon, Stefano},
  booktitle = 	 {Proceedings of the 38th International Conference on Machine Learning},
  pages = 	 {8005--8015},
  year = 	 {2021},
  volume = 	 {139},
  series = 	 {Proceedings of Machine Learning Research},
  abstract = 	 {In many real world problems, we want to infer some property of an expensive black-box function f, given a budget of T function evaluations. One example is budget constrained global optimization of f, for which Bayesian optimization is a popular method. Other properties of interest include local optima, level sets, integrals, or graph-structured information induced by f. Often, we can find an algorithm A to compute the desired property, but it may require far more than T queries to execute. Given such an A, and a prior distribution over f, we refer to the problem of inferring the output of A using T evaluations as Bayesian Algorithm Execution (BAX). To tackle this problem, we present a procedure, InfoBAX, that sequentially chooses queries that maximize mutual information with respect to the algorithm’s output. Applying this to Dijkstra’s algorithm, for instance, we infer shortest paths in synthetic and real-world graphs with black-box edge costs. Using evolution strategies, we yield variants of Bayesian optimization that target local, rather than global, optima. On these problems, InfoBAX uses up to 500 times fewer queries to f than required by the original algorithm. Our method is closely connected to other Bayesian optimal experimental design procedures such as entropy search methods and optimal sensor placement using Gaussian processes.}
}

@inproceedings{ament2023unexpected,
    author = {Ament, Sebastian and Daulton, Samuel and Eriksson, David and Balandat, Maximilian and Bakshy, Eytan},
    booktitle = {Advances in Neural Information Processing Systems},
    editor = {Oh, A. and Naumann, T. and Globerson, A. and Saenko, K. and Hardt, M. and Levine, S.},
    pages = {20577--20612},
    publisher = {Curran Associates, Inc.},
    title = {Unexpected Improvements to Expected Improvement for Bayesian Optimization},
    volume = {36},
    year = {2023}}

@inproceedings{hvarfner2024vanilla,
author = {Hvarfner, Carl and Hellsten, Erik O. and Nardi, Luigi},
title = {Vanilla Bayesian optimization performs great in high dimensions},
year = {2024},
abstract = {High-dimensional problems have long been considered the Achilles' heel of Bayesian optimization. Spurred by the curse of dimensionality, a large collection of algorithms aim to make it more performant in this setting, commonly by imposing various simplifying assumptions on the objective. In this paper, we identify the degeneracies that make vanilla Bayesian optimization poorly suited to high-dimensional tasks, and further show how existing algorithms address these degeneracies through the lens of lowering the model complexity. Moreover, we propose an enhancement to the prior assumptions that are typical to vanilla Bayesian optimization, which reduces the complexity to manageable levels without imposing structural restrictions on the objective. Our modification - a simple scaling of the Gaussian process lengthscale prior with the dimensionality - reveals that standard Bayesian optimization works drastically better than previously thought in high dimensions, clearly outperforming existing state-of-the-art algorithms on multiple commonly considered real-world high-dimensional tasks.},
booktitle = {Proceedings of the 41st International Conference on Machine Learning},
articleno = {835},
numpages = {25},
location = {Vienna, Austria},
series = {ICML'24}
}

@inproceedings{eriksson2019turbo,
 author = {Eriksson, David and Pearce, Michael and Gardner, Jacob and Turner, Ryan D and Poloczek, Matthias},
 booktitle = {Advances in Neural Information Processing Systems},
 pages = {},
 title = {Scalable Global Optimization via Local Bayesian Optimization},
 volume = {32},
 year = {2019}
}

@inproceedings{hvarfner2022joint,
  title={Joint Entropy Eearch for Maximally-Informed Bayesian Optimization},
  author={Hvarfner, Carl and Hutter, Frank and Nardi, Luigi},
  booktitle={Proceedings of the 36th International Conference on Neural Information Processing Systems},
  year={2022}
}

@article{mackay1992information,
  title={Information-based objective functions for active data selection},
  author={MacKay, David JC},
  journal={Neural computation},
  volume={4},
  number={4},
  pages={590--604},
  year={1992},
  publisher={MIT Press One Rogers Street, Cambridge, MA 02142-1209, USA journals-info~…}
}

@inproceedings{riis2022bayesian,
    author = {Riis, Christoffer and Antunes, Francisco and H\"{u}ttel, Frederik and Lima Azevedo, Carlos and Pereira, Francisco},
    booktitle = {Advances in Neural Information Processing Systems},
    pages = {12141--12153},
    title = {Bayesian Active Learning with Fully Bayesian Gaussian Processes},
    volume = {35},
    year = {2022}}

@inproceedings{
tu2022joint,
title={Joint Entropy Search for Multi-Objective Bayesian Optimization},
author={Ben Tu and Axel Gandy and Nikolas Kantas and Behrang Shafei},
booktitle={Advances in Neural Information Processing Systems},
year={2022},
}

@inproceedings{balandat2020botorch,
    author = {Balandat, Maximilian and Karrer, Brian and Jiang, Daniel and Daulton, Samuel and Letham, Ben and Wilson, Andrew G and Bakshy, Eytan},
    booktitle = {Advances in Neural Information Processing Systems},
    pages = {21524--21538},
    title = {BoTorch: A Framework for Efficient Monte-Carlo Bayesian Optimization},
    volume = {33},
    year = {2020}}

@article{letham2019constrained,
  title={Constrained Bayesian optimization with noisy experiments},
  author={Letham, Benjamin and Karrer, Brian and Ottoni, Guilherme and Bakshy, Eytan},
  journal={Bayesian Analysis},
  volume={14},
  number={2},
  pages={495--519},
  year={2019},
  publisher={International Society for Bayesian Analysis}
}

@inproceedings{spearmint,
author = {Snoek, J. and Larochelle, H. and Adams, R. P.},
title = {Practical Bayesian Optimization of Machine Learning Algorithms},
year = {2012},
numpages = {9},
booktitle = {Advances in Neural Information Processing Systems}
}

@inproceedings{wang2017maxvalue,
  title={Max-value Entropy Search for Efficient Bayesian Optimization},
  author={Wang, Zi and Jegelka, Stefanie},
  booktitle={International Conference on Machine Learning (ICML)},
  year={2017}
}

@article{bingham2018pyro,
author = {Bingham, Eli and Chen, Jonathan P. and Jankowiak, Martin and Obermeyer, Fritz and
          Pradhan, Neeraj and Karaletsos, Theofanis and Singh, Rohit and Szerlip, Paul and
          Horsfall, Paul and Goodman, Noah D.},
title = {{Pyro: Deep Universal Probabilistic Programming}},
journal = {Journal of Machine Learning Research},
year = {2018}
}

@article{JMLR:v15:hoffman14a,
  author  = {Matthew D. Hoffman and Andrew Gelman},
  title   = {The No-U-Turn Sampler: Adaptively Setting Path Lengths in Hamiltonian Monte Carlo},
  journal = {Journal of Machine Learning Research},
  year    = {2014},
  volume  = {15},
  number  = {47},
  pages   = {1593--1623},
}

@book{10.5555/1162254,
    author = {Rasmussen, Carl Edward and Williams, Christopher K. I.},
    title = {Gaussian Processes for Machine Learning (Adaptive Computation and Machine Learning)},
    year = {2005},
    publisher = {The MIT Press}
}

@article{Srinivas_2012,
   title={Information-Theoretic Regret Bounds for Gaussian Process Optimization in the Bandit Setting},
   volume={58},
   number={5},
   journal={IEEE Transactions on Information Theory},
   author={Srinivas, N. and Krause, A. and Kakade, S. M. and Seeger, M. W.},
   year={2012},
}

@article{entropysearch,
author = {Hennig, P. and Schuler, C. J.},
title = {Entropy Search for Information-Efficient Global Optimization},
year = {2012},
issue_date = {January 2012},
publisher = {JMLR.org},
volume = {13},
number = {1},
abstract = {Contemporary global optimization algorithms are based on local measures of utility, rather than a probability measure over location and value of the optimum. They thus attempt to collect low function values, not to learn about the optimum. The reason for the absence of probabilistic global optimizers is that the corresponding inference problem is intractable in several ways. This paper develops desiderata for probabilistic optimization algorithms, then presents a concrete algorithm which addresses each of the computational intractabilities with a sequence of approximations and explicitly addresses the decision problem of maximizing information gain from each evaluation.},
journal = {Journal of Machine Learning Research},
month = jun,
pages = {1809–1837},
numpages = {29},
keywords = {Gaussian processes, optimization, expectation propagation, probability, information}
}

@phdthesis{swersky2017improving,
  title={Improving Bayesian Optimization for Machine Learning using Expert Priors},
  author={Swersky, K. J.},
  year={2017},
    school={University of Toronto}
}

@inproceedings{pes,
 author = {Hern\'{a}ndez-Lobato, J. M. and Hoffman, M. W. and Ghahramani, Z.},
 booktitle = {Advances in Neural Information Processing Systems},
  title = {Predictive Entropy Search for Efficient Global Optimization of Black-box Functions},
 year = {2014}
}

@inproceedings{gardner2018gpytorch,
  title={GPyTorch: Blackbox Matrix-Matrix Gaussian Process Inference with GPU Acceleration},
  author={Gardner, Jacob R and Pleiss, Geoff and Bindel, David and Weinberger, Kilian Q and Wilson, Andrew Gordon},
  booktitle={Advances in Neural Information Processing Systems},
  year={2018}
}

@book{garnett_bayesoptbook_2023,
  author    = {Garnett, Roman},
  title     = {{Bayesian Optimization}},
  year      = {2023},
  publisher = {Cambridge University Press},
}

@article{yik2025battery,
title = {Accelerating aqueous electrolyte design with automated full-cell battery experimentation and Bayesian optimization},
journal = {Cell Reports Physical Science},
pages = {102548},
year = {2025},
author = {Jackie T. Yik and Carl Hvarfner and Jens Sjölund and Erik J. Berg and Leiting Zhang},
keywords = {Bayesian optimization, high throughput, self-driving labs, aqueous batteries, automation,  gas analysis},
abstract = {Summary
The integration of automation and data-driven methodologies offers a promising approach to accelerating materials discovery in energy storage research. Thus far, in battery research, coin-cell assembly has advanced to become nearly fully automated but remains largely disconnected from data-driven methods. To bridge the disconnect, this work presents a self-driving laboratory framework to accelerate electrolyte discovery by integrating automated coin-cell assembly, galvanostatic cycling of LiFePO4||Li4Ti5O12 organic-aqueous full cells, and Bayesian optimization for selecting subsequent experiments based on prior results. The study explored an organic-aqueous hybrid electrolyte system comprising four co-solvents and two lithium-conducting salts. Using this framework, cells with an optimized electrolyte cycled with at least 94% Coulombic efficiency. Additionally, online electrochemical mass spectrometry revealed that the optimized organic co-solvents successfully mitigated the parasitic hydrogen evolution reaction. The results highlight the potential of combining Bayesian optimization with autonomous full-cell experimentation while contributing new electrolyte design insights for next-generation aqueous batteries.}
}

@inproceedings{NIPS1995_7cce53cf,
 author = {Williams, Christopher and Rasmussen, Carl},
 booktitle = {Advances in Neural Information Processing Systems},
 title = {Gaussian Processes for Regression},
 volume = {8},
 year = {1995}
}

@inproceedings{maus2022lolbo,
 author = {Maus, Natalie and Jones, Haydn and Moore, Juston and Kusner, Matt J and Bradshaw, John and Gardner, Jacob},
 booktitle = {Advances in Neural Information Processing Systems},
 pages = {34505--34518},
 title = {Local Latent Space Bayesian Optimization over Structured Inputs},
 volume = {35},
 year = {2022}
}

@misc{zhang2021ddd,
    title={Distance-distributed design for Gaussian process surrogates}, 
    author={Boya Zhang and D. Austin Cole and Robert B. Gramacy},
    year={2019},
    eprint={1812.02794},
    archivePrefix={arXiv},
    primaryClass={stat.ME}
}

@article{rainforth2024modern,
    author = {Rainforth, Tom and Foster, Adam and Ivanova, Desi R. and Smith, Freddie Bickford},
    journal = {Statistical Science},
    number = {1},
    pages = {100 -- 114},
    title = {{Modern Bayesian Experimental Design}},
    volume = {39},
    year = {2024}}

@inproceedings{srinivas2010ucb,
author = {Srinivas, Niranjan and Krause, Andreas and Kakade, Sham and Seeger, Matthias},
title = {Gaussian process optimization in the bandit setting: no regret and experimental design},
year = {2010},
abstract = {Many applications require optimizing an unknown, noisy function that is expensive to evaluate. We formalize this task as a multi-armed bandit problem, where the payoff function is either sampled from a Gaussian process (GP) or has low RKHS norm. We resolve the important open problem of deriving regret bounds for this setting, which imply novel convergence rates for GP optimization. We analyze GP-UCB, an intuitive upper-confidence based algorithm, and bound its cumulative regret in terms of maximal information gain, establishing a novel connection between GP optimization and experimental design. Moreover, by bounding the latter in terms of operator spectra, we obtain explicit sublinear regret bounds for many commonly used covariance functions. In some important cases, our bounds have surprisingly weak dependence on the dimensionality. In our experiments on real sensor data, GP-UCB compares favorably with other heuristical GP optimization approaches.},
booktitle = {Proceedings of the 27th International Conference on International Conference on Machine Learning},
series = {ICML'10}
}

@article{muller1999optimal,
  title={Optimal designs for variogram estimation},
  author={M{\"u}ller, Werner G and Zimmerman, Dale L},
  journal={Environmetrics: The official journal of the International Environmetrics Society},
  volume={10},
  number={1},
  pages={23--37},
  year={1999},
  publisher={Wiley Online Library}
}

@article{zimmerman2006optimal,
  title={Optimal network design for spatial prediction, covariance parameter estimation, and empirical prediction},
  author={Zimmerman, Dale L},
  journal={Environmetrics: The official journal of the International Environmetrics Society},
  volume={17},
  number={6},
  pages={635--652},
  year={2006},
  publisher={Wiley Online Library}
}

@article{Sauer02012023,
author = {Annie Sauer and Robert B. Gramacy and David Higdon and},
title = {Active Learning for Deep Gaussian Process Surrogates},
journal = {Technometrics},
volume = {65},
number = {1},
pages = {4--18},
year = {2023},
}

@misc{park2024activelearningpiecewisegaussian,
      title={Active Learning of Piecewise Gaussian Process Surrogates}, 
      author={Chiwoo Park and Robert Waelder and Bonggwon Kang and Benji Maruyama and Soondo Hong and Robert Gramacy},
      year={2024},
      eprint={2301.08789},
      archivePrefix={arXiv},
      primaryClass={cs.LG},
      url={https://arxiv.org/abs/2301.08789}, 
}

@article{Cole01012023,
author = {D. Austin Cole and Robert B. Gramacy and James E. Warner and Geoffrey F. Bomarito and Patrick E. Leser and William P. Leser and},
title = {Entropy-based adaptive design for contour finding and estimating reliability},
journal = {Journal of Quality Technology},
volume = {55},
number = {1},
pages = {43--60},
year = {2023},
}

@incollection{frazier2016bayesian,
  title={Bayesian optimization for materials design},
  author={Frazier, Peter I and Wang, Jialei},
  booktitle={Information science for materials discovery and design},
  pages={45--75},
  year={2016},
  publisher={Springer}
}

@article{koppencurse,
  title={The curse of dimensionality},
  author={K{\"o}ppen, Mario},
  journal = {5th online world conference on soft computing in industrial applications (WSC5)},
  year    = {2000},
}

@article{JMLR:v9:krause08a,
  author  = {Andreas Krause and Ajit Singh and Carlos Guestrin},
  title   = {Near-Optimal Sensor Placements in Gaussian Processes: Theory, Efficient Algorithms and Empirical Studies},
  journal = {Journal of Machine Learning Research},
  year    = {2008},
  volume  = {9},
  number  = {8},
  pages   = {235--284},
}

@inproceedings{garnett2014active,
author = {Garnett, Roman and Osborne, Michael A. and Hennig, Philipp},
title = {Active learning of linear embeddings for Gaussian processes},
year = {2014},
booktitle = {Proceedings of the Thirtieth Conference on Uncertainty in Artificial Intelligence},
pages = {230–239},
numpages = {10},
series = {UAI'14}
}

@ARTICLE{wilson2017reparam,
    author = {{Wilson}, James T. and {Moriconi}, Riccardo and {Hutter}, Frank and {Deisenroth}, Marc Peter},
    title = "{The reparameterization trick for acquisition functions}",
    journal = {arXiv e-prints},
    eprint = {1712.00424},
    year = {2017}
    }

@inproceedings{lalchand2020approximate,
  title={Approximate inference for fully bayesian gaussian process regression},
  author={Lalchand, Vidhi and Rasmussen, Carl Edward},
  booktitle={Symposium on Advances in Approximate Bayesian Inference},
  pages={1--12},
  year={2020},
  organization={PMLR}
}

@inproceedings{wilson2020efficiently,
    title={Efficiently sampling functions from Gaussian process posteriors},
    author={James T. Wilson
            and Viacheslav Borovitskiy
            and Alexander Terenin
            and Peter Mostowsky
            and Marc Peter Deisenroth},
    booktitle={International Conference on Machine Learning},
    year={2020},
}

@InProceedings{pmlr-v206-bickfordsmith23a,
  title = 	 {Prediction-Oriented Bayesian Active Learning},
  author =       {Bickford Smith, Freddie and Kirsch, Andreas and Farquhar, Sebastian and Gal, Yarin and Foster, Adam and Rainforth, Tom},
  booktitle = 	 {Proceedings of The 26th International Conference on Artificial Intelligence and Statistics},
  pages = 	 {7331--7348},
  year = 	 {2023},
  volume = 	 {206},
  series = 	 {Proceedings of Machine Learning Research},
}

@article{houlsby2011bald,
    author = {{Houlsby}, Neil and {Husz{\'a}r}, Ferenc and {Ghahramani}, Zoubin and {Lengyel}, M{\'a}t{\'e}},
    journal = {arXiv e-prints},
    month = dec,
    pages = {arXiv:1112.5745},
    title = {{Bayesian Active Learning for Classification and Preference Learning}},
    year = 2011
}

@Article{Mockus1978,
  author       = {Mockus, J. and Tiesis, V. and Zilinskas, A.},
  title        = {The Application of {B}ayesian Methods for Seeking the Extremum},
  volume       = {2},
  number       = {117-129},
  pages        = {2},
  year         = {1978},
  journal = {Towards Global Optimization},
  publisher    = {Amsterdam: Elsevier},
}

@article{jonesei,
author = {Jones, D. and Schonlau, M. and Welch, W.},
year = {1998},
month = {12},
pages = {455-492},
title = {Efficient Global Optimization of Expensive Black-Box Functions},
volume = {13},
journal = {Journal of Global Optimization},
}

@article{moss2021gibbon,
  author  = {Henry B. Moss and David S. Leslie and Javier Gonzalez and Paul Rayson},
  title   = {GIBBON: General-purpose Information-Based Bayesian Optimisation},
  journal = {Journal of Machine Learning Research},
  year    = {2021},
  volume  = {22},
}

@phdthesis{osborne2010bayesian,
  title={Bayesian Gaussian processes for sequential prediction, optimisation and quadrature},
  author={Osborne, Michael A},
  year={2010},
  school={Oxford University, UK}
}

@inproceedings{papenmeier2023bounce,
  title={{Bounce: a Reliable Bayesian Optimization Algorithm for Combinatorial and Mixed Spaces}},
  author={Papenmeier, Leonard and Nardi, Luigi and Poloczek, Matthias},
booktitle={Advances in Neural Information Processing Systems},
year={2023},

}

@inproceedings{
hvarfner2023selfcorrecting,
title={Self-Correcting Bayesian Optimization through Bayesian Active Learning},
author={Hvarfner, Carl and Hellsten,Erik and Hutter, Frank and Nardi, Luigi},
booktitle={Thirty-seventh Conference on Neural Information Processing Systems},
year={2023},
}

@article{frazier2018tutorial,
  title={{A tutorial on Bayesian optimization}},
  author={Frazier, P. I.},
  journal={arXiv preprint arXiv:1807.02811},
  year={2018}
}

@inproceedings{eriksson2021high,
  title={{High-dimensional Bayesian optimization with sparse axis-aligned subspaces}},
  author={Eriksson, David and Jankowiak, Martin},
  booktitle={Uncertainty in Artificial Intelligence},
  pages={493--503},
  year={2021},
  organization={PMLR}
}

@inproceedings{agarwal2018online,
  title={Online parameter selection for web-based ranking problems},
  author={Agarwal, Deepak and Basu, Kinjal and Ghosh, Souvik and Xuan, Ying and Yang, Yang and Zhang, Liang},
  booktitle={Proceedings of the 24th ACM SIGKDD International Conference on Knowledge Discovery \& Data Mining},
  pages={23--32},
  year={2018}
}

@inproceedings{feng2025fastandslow,
    author = {Feng, Qing and Daulton, Samuel and Letham, Benjamin and Balandat, Maximilian and Bakshy, Eytan},
    booktitle = {Proceedings of the 31st ACM SIGKDD Conference on Knowledge Discovery and Data Mining V.1},
    pages = {2235--2246},
    series = {KDD '25},
    title = {Experimenting, Fast and Slow: Bayesian Optimization of Long-term Outcomes with Online Experiments},
    year = {2025}}

@article{JOHNSON1990131,
title = {Minimax and maximin distance designs},
journal = {Journal of Statistical Planning and Inference},
volume = {26},
number = {2},
pages = {131-148},
year = {1990},
author = {M.E. Johnson and L.M. Moore and D. Ylvisaker},
keywords = {Bayesian design, asymptotic optimality, computer experiments},
abstract = {Beginning with an arbitrary set and a distance defined on it, we develop the notions of minimax and maximin distance sets (designs). These are intended for use in the selection-of-sites problem when the underlying surface is modeled by a prior distribution and observations are made without error. It is shown that such designs have quite general asymptotically optimum (and dual) characteristics under what are termed the G- and D-criteria. There are many examples given, dealing espeacially with the unit square and with k factors at two levels.}
}

@inproceedings{10.5555/3600270.3601798,
author = {Neiswanger, Willie and Yu, Lantao and Zhao, Shengjia and Meng, Chenlin and Ermon, Stefano},
title = {Generalizing Bayesian optimization with decision-theoretic entropies},
year = {2024},
abstract = {Bayesian optimization (BO) is a popular method for efficiently inferring optima of an expensive black-box function via a sequence of queries. Existing information-theoretic BO procedures aim to make queries that most reduce the uncertainty about optima, where the uncertainty is captured by Shannon entropy. However, an optimal measure of uncertainty would, ideally, factor in how we intend to use the inferred quantity in some downstream procedure. In this paper, we instead consider a generalization of Shannon entropy from work in statistical decision theory [13, 39], which contains a broad class of uncertainty measures parameterized by a problem-specific loss function corresponding to a downstream task. We first show that special cases of this entropy lead to popular acquisition functions used in BO procedures such as knowledge gradient, expected improvement, and entropy search. We then show how alternative choices for the loss yield a flexible family of acquisition functions that can be customized for use in novel optimization settings. Additionally, we develop gradient-based methods to efficiently optimize our proposed family of acquisition functions, and demonstrate strong empirical performance on a diverse set of sequential decision making tasks, including variants of top-k optimization, multi-level set estimation, and sequence search.},
booktitle = {Proceedings of the 36th International Conference on Neural Information Processing Systems},
}

@article{chaloner1995bed,
 author = {Kathryn Chaloner and Isabella Verdinelli},
 journal = {Statistical Science},
 number = {3},
 pages = {273--304},
 publisher = {Institute of Mathematical Statistics},
 title = {Bayesian Experimental Design: A Review},
 urldate = {2024-11-04},
 volume = {10},
 year = {1995}
}

@inproceedings{guestrin2005near,
  title={Near-optimal sensor placements in gaussian processes},
  author={Guestrin, Carlos and Krause, Andreas and Singh, Ajit Paul},
  booktitle={Proceedings of the 22nd international conference on Machine learning},
  pages={265--272},
  year={2005}
}

@article { ZimLin2021a,
  author = {Lucas Zimmer and Marius Lindauer and Frank Hutter},
  title = {Auto-PyTorch Tabular: Multi-Fidelity MetaLearning for Efficient and Robust AutoDL},
  journal = {IEEE Transactions on Pattern Analysis and Machine Intelligence},
  year = {2021},
  volume = {43},
  number = {9},
  pages = {3079 - 3090}
}

@inproceedings{krause2006near,
  title={Near-optimal sensor placements: Maximizing information while minimizing communication cost},
  author={Krause, Andreas and Guestrin, Carlos and Gupta, Anupam and Kleinberg, Jon},
  booktitle={Proceedings of the 5th international conference on Information processing in sensor networks},
  pages={2--10},
  year={2006}
}

@inproceedings{NEURIPS2024_9cf904c8,
 author = {Ziomek, Juliusz and Adachi, Masaki and Osborne, Michael A.},
 booktitle = {Advances in Neural Information Processing Systems},
 pages = {86346--86374},
 title = {Bayesian Optimisation with Unknown Hyperparameters: Regret Bounds Logarithmically Closer to Optimal},
 volume = {37},
 year = {2024}
}

@InProceedings{pmlr-v162-daulton22a,
title = 	 {Multi-Objective Bayesian Optimization over High-Dimensional Search Spaces},
author =       {Daulton, Samuel and Eriksson, David and Balandat, Maximilian and Bakshy, Eytan},
booktitle = 	 {Proceedings of the Thirty-Eighth Conference on Uncertainty in Artificial Intelligence},
year = 	 {2022},
series = 	 {Proceedings of Machine Learning Research},
publisher =    {PMLR},
}

@InProceedings{pmlr-v54-gardner17a,
  title = 	 {{Discovering and Exploiting Additive Structure for Bayesian Optimization}},
  author = 	 {Gardner, Jacob and Guo, Chuan and Weinberger, Kilian and Garnett, Roman and Grosse, Roger},
  booktitle = 	 {Proceedings of the 20th International Conference on Artificial Intelligence and Statistics},
  pages = 	 {1311--1319},
  year = 	 {2017},
  volume = 	 {54},
  series = 	 {Proceedings of Machine Learning Research},
  month = 	 {20--22 Apr},
  publisher =    {PMLR},
}

@INPROCEEDINGS{sambu2000GPRactiveselection,
  author={Sambu Seo and Wallat, M. and Graepel, T. and Obermayer, K.},
  booktitle={Proceedings of the IEEE-INNS-ENNS International Joint Conference on Neural Networks. IJCNN 2000. Neural Computing: New Challenges and Perspectives for the New Millennium}, 
  title={Gaussian process regression: active data selection and test point rejection}, 
  year={2000},
  volume={3},
}

@inproceedings{10.1145/3377930.3390155,
author = {Bossek, Jakob and Doerr, Carola and Kerschke, Pascal},
title = {Initial design strategies and their effects on sequential model-based optimization: an exploratory case study based on BBOB},
year = {2020},
booktitle = {Proceedings of the 2020 Genetic and Evolutionary Computation Conference},
pages = {778–786},
numpages = {9},
keywords = {continuous black-box optimization, design of experiments, initial design, sequential model-based optimization},
series = {GECCO '20}
}

@article{MACKAY199573,
title = {Bayesian neural networks and density networks},
journal = {Nuclear Instruments and Methods in Physics Research Section A: Accelerators, Spectrometers, Detectors and Associated Equipment},
volume = {354},
number = {1},
pages = {73-80},
year = {1995},
note = {Proceedings of the Third Workshop on Neutron Scattering Data Analysis},
author = {David J.C MacKay},
abstract = {This paper reviews the Bayesian approach to learning in neural networks, then introduces a new adaptive model, the density network. This is a neural network for which target outputs are provided, but the inputs are unspecified. When a probability distribution is placed on the unknown inputs, a latent variable model is defined that is capable of discovering the underlying dimensionality of a data set. A Bayesian learning algorithm for these networks is derived and demonstrated.}
}

@InProceedings{pmlr-v202-muller23a,
  title = 	 {{PFN}s4{BO}: In-Context Learning for {B}ayesian Optimization},
  author =       {M\"{u}ller, Samuel and Feurer, Matthias and Hollmann, Noah and Hutter, Frank},
  booktitle = 	 {Proceedings of the 40th International Conference on Machine Learning},
  pages = 	 {25444--25470},
  year = 	 {2023},
  volume = 	 {202},
  series = 	 {Proceedings of Machine Learning Research},
  month = 	 {23--29 Jul},
  publisher =    {PMLR},
  abstract = 	 {In this paper, we use Prior-data Fitted Networks (PFNs) as a flexible surrogate for Bayesian Optimization (BO). PFNs are neural processes that are trained to approximate the posterior predictive distribution (PPD) through in-context learning on any prior distribution that can be efficiently sampled from. We describe how this flexibility can be exploited for surrogate modeling in BO. We use PFNs to mimic a naive Gaussian process (GP), an advanced GP, and a Bayesian Neural Network (BNN). In addition, we show how to incorporate further information into the prior, such as allowing hints about the position of optima (user priors), ignoring irrelevant dimensions, and performing non-myopic BO by learning the acquisition function. The flexibility underlying these extensions opens up vast possibilities for using PFNs for BO. We demonstrate the usefulness of PFNs for BO in a large-scale evaluation on artificial GP samples and three different hyperparameter optimization testbeds: HPO-B, Bayesmark, and PD1. We publish code alongside trained models at https://github.com/automl/PFNs4BO.}
}

@article{griffiths2020constrained,
  title={Constrained Bayesian optimization for automatic chemical design using variational autoencoders},
  author={Griffiths, Ryan-Rhys and Hern{\'a}ndez-Lobato, Jos{\'e} Miguel},
  journal={Chemical Science},
  year={2020},
  publisher={Royal Society of Chemistry}
}

@misc{kirsch2021jepig,
      title={Test Distribution-Aware Active Learning: A Principled Approach Against Distribution Shift and Outliers}, 
      author={Andreas Kirsch and Tom Rainforth and Yarin Gal},
      year={2021},
      eprint={2106.11719},
      archivePrefix={arXiv},
      primaryClass={cs.LG},
}
%%%%%%%%%%%%%%%%%%%%%%%%%%%%%%%%%%%%%%%%%%%%%%%%%%%%%%%%%%%%

\newpage
\newpage
\appendix

%%%%%%%%%%%%%%%%%%%%%%%%%%%%%%%%%%%%%%%%%%%%%%%%%%%%%%%%%%%%
\section{Experimental Setup}\label{app:exp_setup}
We describe the full experimental setup used in the paper: the design of the Bayesian optimization and active learning loops, the benchmarks used, the compute as well as the licenses of all software and datasets. The code to run the experiments in the paper, including all the benchmarks and plotting to reproduce our results, is available at \url{https://github.com/hipeneurips/HIPE}.

An easy-to-run, tested version of the algorithm and accompanying notebook is available open-source in BoTorch. The HIPE acquisition is found at \url{https://github.com/meta-pytorch/botorch/blob/main/botorch_community/acquisition/bayesian_active_learning.py}. The notebook, which demonstrates how to run the algorithm should be run in its indented setting and the expected results, can be found at \url{https://github.com/meta-pytorch/botorch/blob/main/notebooks_community/hyperparameter_informed_predictive_exploration}.

\subsection{Bayesian Optimization Loop}
All experiments were conducted using a standardized pipeline based on BoTorch \citep{balandat2020botorch} and GPyTorch \citep{gardner2018gpytorch}. We use fully Bayesian Gaussian Process (GP) models, with hyperparameter inference performed via the No-U-Turn Sampler (NUTS )\citep{JMLR:v15:hoffman14a} as implemented in Pyro \citep{bingham2018pyro}. Unless otherwise specified, we draw 192 burn-in samples followed by 288 hyperparameter samples, retaining every 24\textsuperscript{th} sample for evaluation.

Our GP prior is adapted from \citet{hvarfner2024vanilla} to better suit a fully Bayesian setting. Specifically, we set $\mu_0 = -0.75$ and $\sigma = 0.75$, resulting in $\ell_d \sim \mathcal{LN}(0.75 + \log(D)/2, 0.75)$. The noise standard deviation is modeled as $\sigma_\varepsilon \sim \mathcal{LN}(-5.5, 0.75)$, and the constant mean parameter follows $c \sim \mathcal{N}(0, 0.25)$. These modifications ensure that the means of the priors for $\ell_d$ and $\sigma_\varepsilon^2$ approximately match the modes of the corresponding parameters in the prior proposed by~\citet{hvarfner2024vanilla}, producing similar hyperparameter values in practice.

Acquisition functions are optimized jointly over the batch using multi-start L-BFGS-B optimization with 4 random restarts and 384 initial samples drawn from a scrambled Sobol sequence. For the optimization of \logei{}, we additionally sample 384 points from a Gaussian distribution centered at the current incumbent to improve local search performance.

For our proposed \algo method and relevant baselines (BALD and NIPV), Monte Carlo estimators use $M = 12$ hyperparameter samples, $T = 1024$ test points drawn uniformly from the search space, and $N = 128$ predictive posterior samples. 

%%%%%%%%%%%%%%%%%%%%%%%%%%%%%%%
\subsection{Benchmarks}
We use three types of benchmarks: synthetic optimization test functions, surrogate-based hyperparameter optimization tasks from LCBench~\citep{ZimLin2021a}, and high-dimensional SVM hyperparameter optimization problems. Synthetic functions are standard benchmarks for evaluating active learning and Bayesian optimization under controlled noise and dimensionality. LCBench tasks are GP surrogate models trained on 2,000 evaluations of multi-layer perceptrons (MLPs) on real-world datasets, using a Matern 3/2 kernel; all surrogates are included as part of our code release. The SVM benchmarks follow the setup in \citet{ament2023unexpected}, based on the problem originally introduced in \citet{eriksson2021high}. For these tasks, a Support Vector Regressor \citep{svr} is fit to the CTSlice dataset, using a fixed subset of 5,000 data points, with 20\% reserved for validation. To vary the dimensionality, an XGBoost model is fit to the original 388-dimensional dataset, and the most important features are retained according to XGBoost's default feature selection criterion. The objective is to minimize the validation RMSE. In synthetic functions, additive Gaussian noise is introduced directly to the function evaluations (see Table~\ref{tab:noise_levels} for details). The LCBench benchmarks rely on pre-trained surrogate models, where the posterior mean is evaluated, and in cases where $\sigma_\varepsilon$ is non-zero, additional noise is added to the evaluation of the posterior mean.

\begin{table}[h]
\centering
\caption{Noise levels for all Active Learning (AL) and Bayesian Optimization (BO) benchmarks.\\[2ex]}
\label{tab:noise_levels}
\begin{tabular}{llccc}
\toprule
\textbf{Category} & \textbf{Benchmark} & \textbf{Task Type} & \textbf{Dimensionality} & \textbf{$\sigma_\varepsilon$} \\
\midrule
\multirow{5}{*}{Synthetic} 
& Ackley (4D) & BO & 4 & 2.0 \\
& Hartmann (6D) & BO / AL & 6 & 0.5 \\
& Hartmann (6D) & BO / AL & 12 & 0.5 \\
& Hartmann (4D) & BO & 4 & 0.5 \\
& Hartmann (4D) & BO & 8 & 0.5 \\
\midrule
\multirow{7}{*}{LCBench} 
& Car & AL & 7 & 2.5 \\
& Australian & AL & 7 & 2.5 \\
& Fashion-MNIST & BO & 7 & 0.0 \\
& MiniBooNE & BO & 7 & 0.0 \\
& Car & BO & 7 & 0.0 \\
& Higgs & BO & 7 & 0.0 \\
& Segment & BO & 7 & 0.0 \\
\midrule
\multirow{2}{*}{SVM} 
& Feature-reduced SVM & BO & 20 & 0.0 \\
& Feature-reduced SVM & BO & 40 & 0.0 \\
\bottomrule
\end{tabular}
\end{table}

%%%%%%%%%%%%%%%%%%%%%%%%%%%%%%%
\subsection{Licenses}
\label{sec:licenses}

The following software packages, libraries and datasets were used in our experiments and for presenting the results in the paper:

\begin{itemize}
    \item \textbf{GPyTorch}, \textbf{BoTorch}, \textbf{Hydra}: MIT License
    \item \textbf{PyTorch}, \textbf{NumPy}, \textbf{SciPy}, \textbf{Pandas}: BSD Licenses
    \item \textbf{Matplotlib}, \textbf{Seaborn}: PSF/BSD Licenses
    \item \textbf{LCBench}: Apache License    
\end{itemize}

%%%%%%%%%%%%%%%%%%%%%%%%%%%%%%%
\subsection{Compute Resources}
\label{sec:compute_resources}

All experiments were conducted using an NVIDIA A40 GPU cluster. The compute usage to run all experiments in the main paper amounts to approximately 1000 GPU hours, and an additional 500 GPU hours to produce all the results provided in Appendix~\ref{app:extra_results}.

%%%%%%%%%%%%%%%%%%%%%%%%%%%%%%%%%%%%%%%%%%%%%%%%%%%%%%%%%%%%
\section{Derivation of \algo as Joint Information Gain}
\label{app:derivation}

Recall Proposition~\ref{prop:hipe_joint_ig} from section~\ref{subsec:method:HIPE}.
\betaequiv*

\begin{proof}[Proof of Proposition~\ref{prop:hipe_joint_ig}]
For $\beta=1$, we have
\begin{align*}
\algo_1(\setx) &=
-\E_{\hps, y(\setx)}\left[\E_{\bm{x}_*}\left[\ent\left[y(\bm{x}_*) \mid \hps, y(\setx)\right]\right]\right] 
     +
\left(\ent [y(\setx)] - \E_{\hps}\left[\ent[y(\setx) \mid \hps]\right]\right)
\end{align*}
Therefore, with $\setx^* := \argmax_{\setx \in \mathbb{R}^{q \times D}} \algo_1(\setx)$,
\begin{subequations}
\begin{align}
\setx^* &= \argmax_{\setx \in \mathbb{R}^{q \times D}} -\E_{\hps, y(\setx)}\left[\E_{\bm{x}_*}\left[\ent\left[y(\bm{x}_*) \mid \hps, y(\setx)\right]\right]\right] 
     +
\left(\ent [y(\setx)] - \E_{\hps}\left[\ent[y(\setx) \mid \hps]\right]\right)
\\
&= \argmax_{\setx \in \mathbb{R}^{q \times D}}\;
- \E_{\hps, y({\setx})}\left[\E_{\bm{x}_*}[\ent[y(\bm{x}_*)| \hps, y({\bm{X}})]]\right] - \E_{\hps} [\ent[\hps| y({\bm{X}})]]
\\
&= \argmax_{\setx \in \mathbb{R}^{q \times D}}\;
\E_{\bm{x}_*}\left[\ent[y(\bm{x}_*), \hps)] - \E_{y({\bm{X}})} [\ent[y(\bm{x}_*), \hps| y({\bm{X}})]] \right]
\;\;\texttt{(by def)}
\\
&= \argmax_{\setx \in \mathbb{R}^{q \times D}}\; 
\E_{\bm{x}_*} [\eig(y(\bm{x}_*), \hps; \setx)]
\end{align}
\end{subequations}
where the equalities follow from the Bayes' rule of conditional entropy and the fact that the $\argmax$ is independent of quantities which do not involve $\setx$.
\end{proof}

%%%%%%%%%%%%%%%%%%%%%%%%%%%%%%%%%%%%%%%%%%%%%%%%%%%%%%%%%%%%
\section{Additional Experiments}
\label{app:extra_results}

We present supplementary experiments to further analyze the behavior of the evaluated methods under varying experimental conditions. These results provide additional insights into the robustness of the methods with respect to batch size during initialization and active learning, as well as their performance trade-offs across different evaluation metrics.

%%%%%%%%%%%%%%%%%%%%%%%
\subsection{Runtime Analysis}
\label{app:runtime}

We compare acquisition function optimization runtime of HIPE and NIPV, and put that in relation to the runtime of the other components of a BO loop that employs fully Bayesian GP modeling. In Table~\ref{tab:runtime} we demonstrate that while HIPE is substantially slower to optimize than NIPV, its total runtime still amounts to only a minor share of the total runtime of the overall BO run.
\begin{table}[!ht]
\centering
\begin{tabular}{l|c|c|c}
\hline
\textbf{Runtime Component (sec)} & \textbf{HIPE} & \textbf{NIPV} & \textbf{Random} \\
\hline
Model fit (initial)     & 21.91 $\pm$ 0.15 & 21.91 $\pm$ 0.15 & N/A \\
Acquisition Opt.               & 19.61 $\pm$ 1.12 & 8.62 $\pm$ 0.37  & ---             \\
Model fit (BO loop)         & 181.79 $\pm$ 7.60 & 189.90 $\pm$ 5.17 & 177.33 $\pm$ 7.69 \\
qLogNEI Optimization            & 4.97 $\pm$ 0.18  & 4.44 $\pm$ 0.05  & 4.35 $\pm$ 0.06 \\
\hline
\end{tabular}
\vspace{1em} 
\caption{Total runtime comparison (in seconds) for HIPE, NIPV, and Random across different components over the course of a 4-batch, $q=8$ optimization run. HIPE is more time-consuming to optimize compared to NIPV. While NIPV takes approximately half as long to optimize, the runtime to optimize HIPE is small relative to the time required to fit the fully Bayesian GP with NUTS.}
\label{tab:runtime}
\end{table}

%%%%%%%%%%%%%%%%%%%%%%%%%%%%%%%%%%%%%%%%%%%%%%%%%%%%%%%%%%%%
\subsection{Impact of Batch Size on Predictive Performance}
\label{app:q_abl}

We investigate how the initialization batch size $q$ affects predictive quality and inference performance on the LCBench tasks introduced in Section~\ref{subsec:lcbench}. We vary $q$ from 6 to 24 in increments of 2 up to $q=16$, then in steps of 4. For each batch size, we evaluate models trained on the initialization batch only — before any active learning iterations — under a fixed-noise setting. This isolates the impact of the initial design and avoids the confounding effects of subsequent data acquisition.

Fig.~\ref{fig:app_qbl_all} summarizes the results across three key metrics: test-set NLL, test-set RMSE, and out-of-sample inference performance. \algo consistently achieves the lowest NLL across most benchmarks and batch sizes, particularly in the $q \in [12, 16]$ range, demonstrating strong calibration and hyperparameter learning. In contrast, RMSE results show that \algo often underperforms relative to simpler baselines like Sobol and Random, suggesting a trade-off between predictive calibration and pointwise accuracy. Inference performance shows \algo leading for smaller batch sizes ($q < 16$), although performance plateaus or declines at higher $q$, indicating that additional samples are not necessarily helpful for inference when uncertainty in relevant parameters remains high. This occurs because, as batch size increases for our experiments, the inferred $\argmax$ has an increased propensity to be located at a boundary. As these boundary points may not be well-performing, inference performance occasionally stagnates or decreases with increasing batch size. Notably, as \algo's initialization is generally more centered than competing methods', it obtains higher in-sample values across most batch sizes.

Unlike the active learning setup, these tasks are noiseless and assume fixed, known noise levels during training. This removes uncertainty in the noise model and creates a distinct evaluation setting focused solely on input selection and hyperparameter inference.

\begin{figure}[htbp]
\centering
\includegraphics[width=\linewidth]{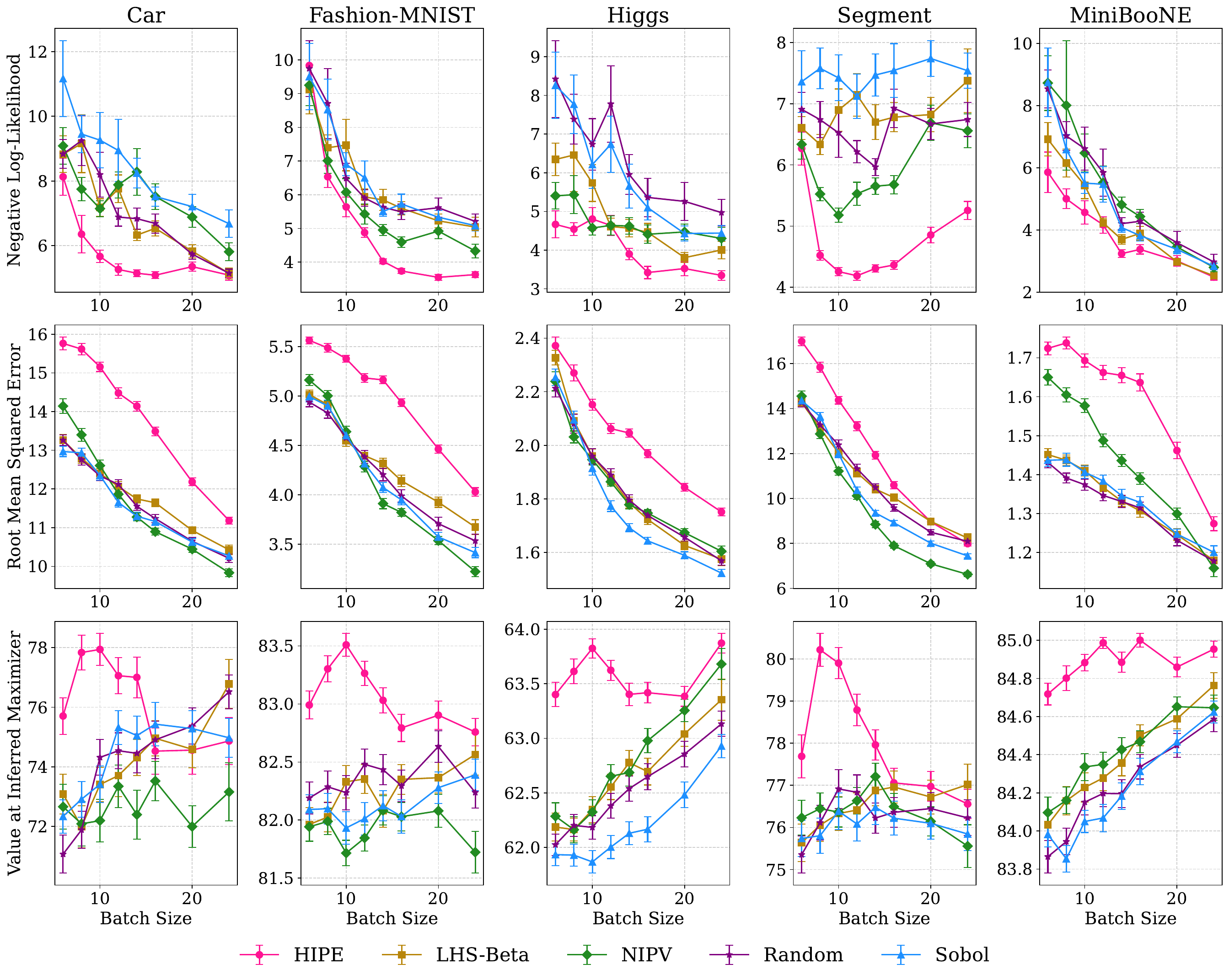}
\caption{
Effect of initialization batch size $q$ on predictive quality and inference across LCBench tasks. Each row shows performance on one of three metrics after the initialization batch: (top) NLL, (middle) RMSE, and (bottom) out-of-sample inference. \algo consistently leads in NLL and small-$q$ inference, while other methods achieve lower RMSE, indicating a trade-off between model calibration and pointwise prediction accuracy.}
\label{fig:app_qbl_all}
\end{figure}

%%%%%%%%%%%%%%%%%%%%%%%%%%%
\subsection{Bayesian Optimization with Different Batch Sizes}
\label{app:bo_abl}

We further investigate the impact of batch size on Bayesian optimization performance by evaluating all methods on the LCBench tasks with a reduced batch size of $q=8$. As shown in Fig.~\ref{fig:app_qbl_all}, \algo maintains its position as the best-performing method overall, achieving the lowest inference regret across the majority of tasks. Notably, \algo benefits more than competing methods from identifying high-valued points during the initialization phase, which can be attributed to \algo biasing towards the center of the search space. However, \algo maintains its advantage throughout, demonstrating its impact on subsequent BO iterations.

\begin{figure}[!ht]
\centering
\includegraphics[width=\linewidth]{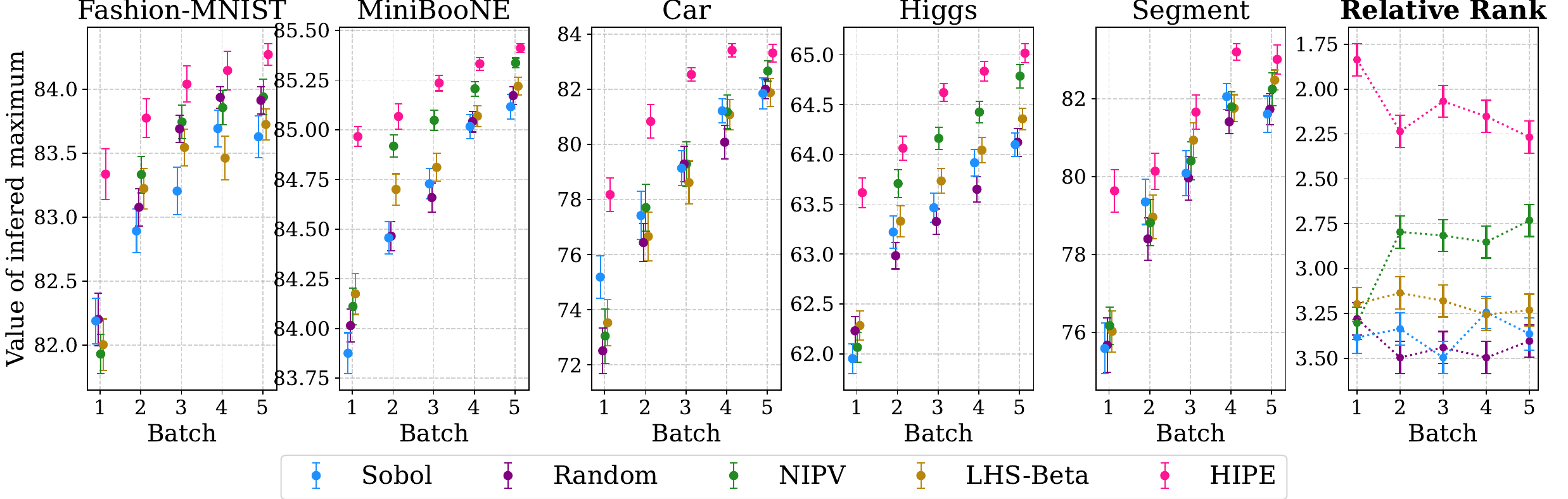}
\caption{
Inference regret per method using a batch size $q$ of 8 across LCBench tasks. \algo is once again the best-performing method overall, but benefits from finding high-valued points during initialization to a greater extent than other methods.}
\label{fig:app_q8_bo}
\end{figure}

%%%%%%%%%%%%%%%%%%%%%%%%%%%
\subsection{Active Learning with Different Batch Sizes}
\label{app:al_abl} 

We analyze the effect of batch size in the active learning setting by comparing model performance under small batches ($q=8$) and large batches ($q=24$). This evaluation assesses each method’s robustness to changes in batch size and its capability to maintain predictive accuracy and effective hyperparameter learning under varying evaluation budgets.

As shown in Fig.~\ref{fig:al_app_q8}, for smaller batches of $q=8$, \algo achieves the best performance on the NLL metric and significantly outperforms BALD—the second-best method on NLL—in terms of MSE. In this setting, NIPV and BALD exhibit inconsistent behavior, alternating between the worst performance on MLL and RMSE, respectively.

When the batch size increases to $q=24$, as visualized in  Fig.~\ref{fig:al_app_q8}, \algo continues to perform strongly, consistently ranking among the top two methods across both evaluation metrics. Notably, its relative performance improves with larger batch sizes, particularly in terms of RMSE, indicating that \algo scales more effectively with increased parallelism in query selection. This suggests that \algo is better suited for scenarios requiring efficient learning under larger batch evaluations.

\begin{figure}[h!]
\centering
\includegraphics[width=\linewidth]{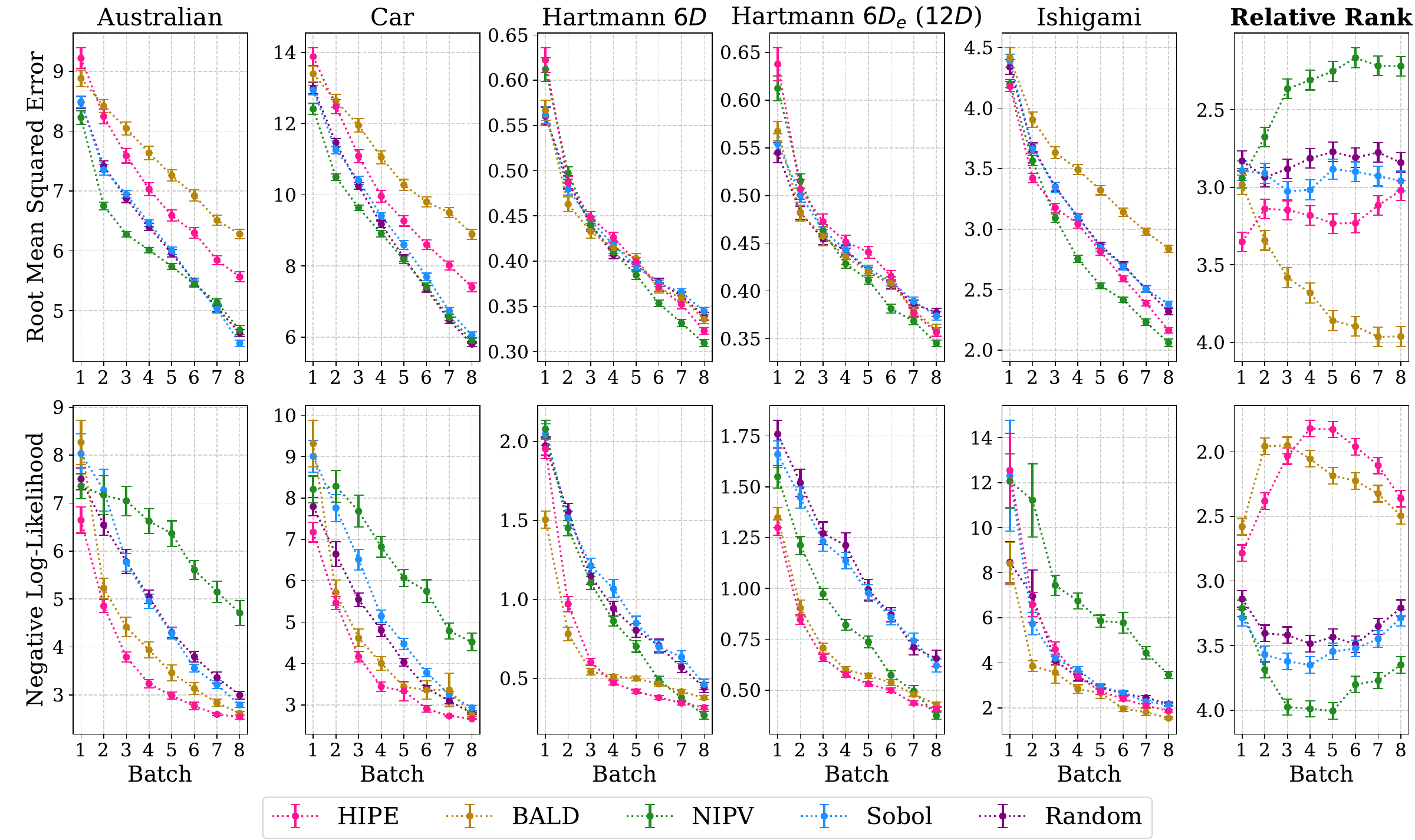}
\caption{
Model accuracy results in the batch active learning setting with smaller batches ($q=8$). RMSE is reported across various synthetic and LCBench surrogate tasks over 8 batches, using $100$ random seeds per benchmark.
\algo achieves the best performance on NLL and outperforms BALD, the second-best method on NLL, in terms of RMSE. NIPV and BALD alternate in exhibiting the worst performance on MLL and RMSE, respectively.}
\label{fig:al_app_q8}
\end{figure}

\begin{figure}[h!]
\centering
\includegraphics[width=\linewidth]{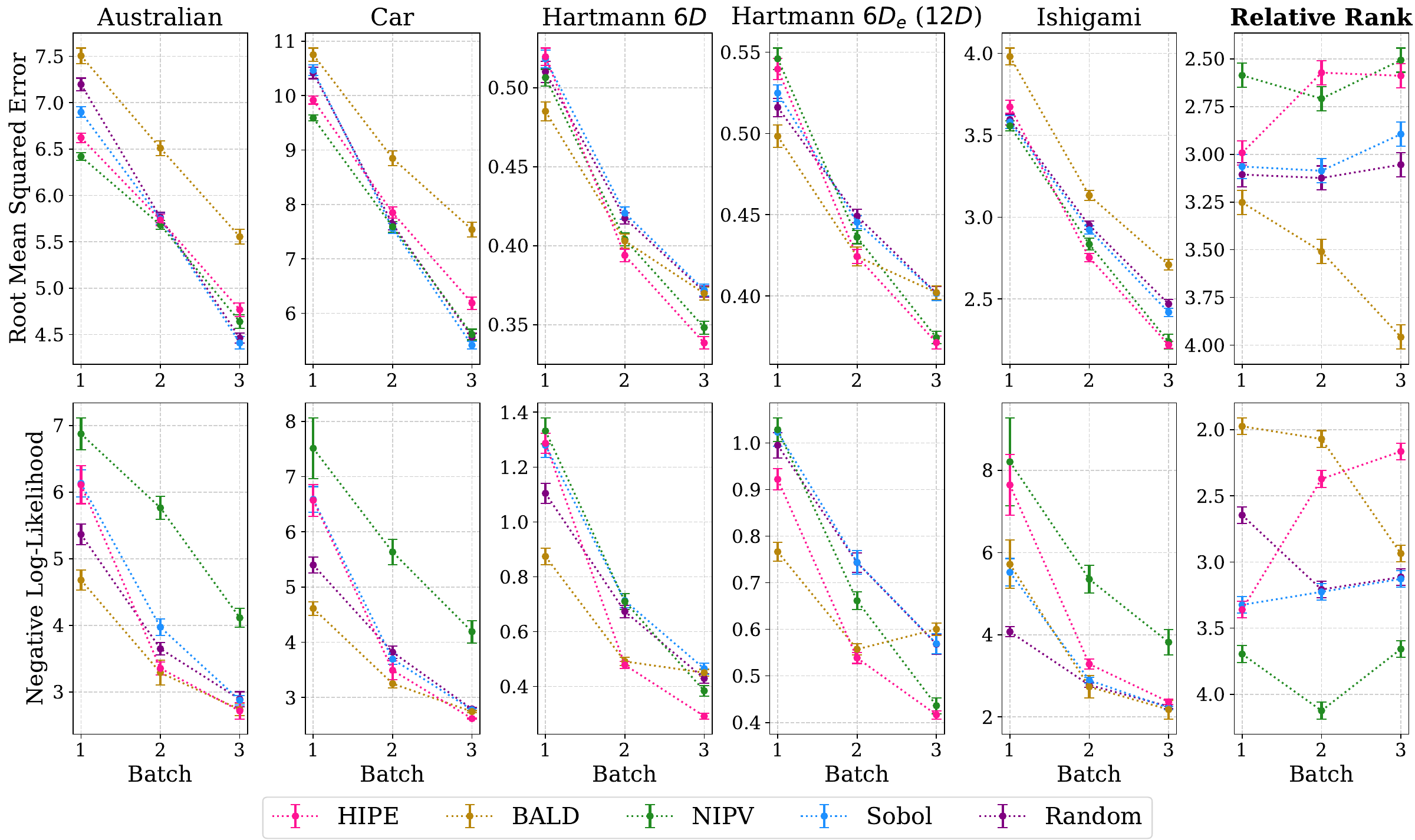}
\caption{
Model accuracy results in the batch active learning setting with larger batches ($q=24$). RMSE is reported across various synthetic and LCBench surrogate tasks over 3 batches, using $100$ random seeds per benchmark.
\algo consistently ranks among the top two methods across both metrics, achieving a strong balance between hyperparameter learning and predictive accuracy. With larger batch sizes, \algo exhibits improved relative performance, particularly on RMSE, demonstrating effective scalability under increased parallel evaluations.}
\label{fig:al_app_q24}
\end{figure}

%%%%%%%%%%%%%%%%%%%%%%%%%%%
\subsection{Complementary Hyperparameter Visualization on SVM}
\label{sec:svm_app}

We display the inferred hyperparameter values of the left-out methods (Random, NIPV) on the 40D-SVM. Both methods infer more accurate hyperparameter values than Sobol, but less accurate than \algo.
Compared to Sobol, both Random and NIPV assign smaller lengthscales to the relevant dimensions, indicating improved identification of important features. However, their estimates remain less precise than those obtained by \algo, which more effectively distinguishes the relevant dimensions, namely the last two.

\begin{figure}[h!]
\centering
\includegraphics[width=\linewidth]{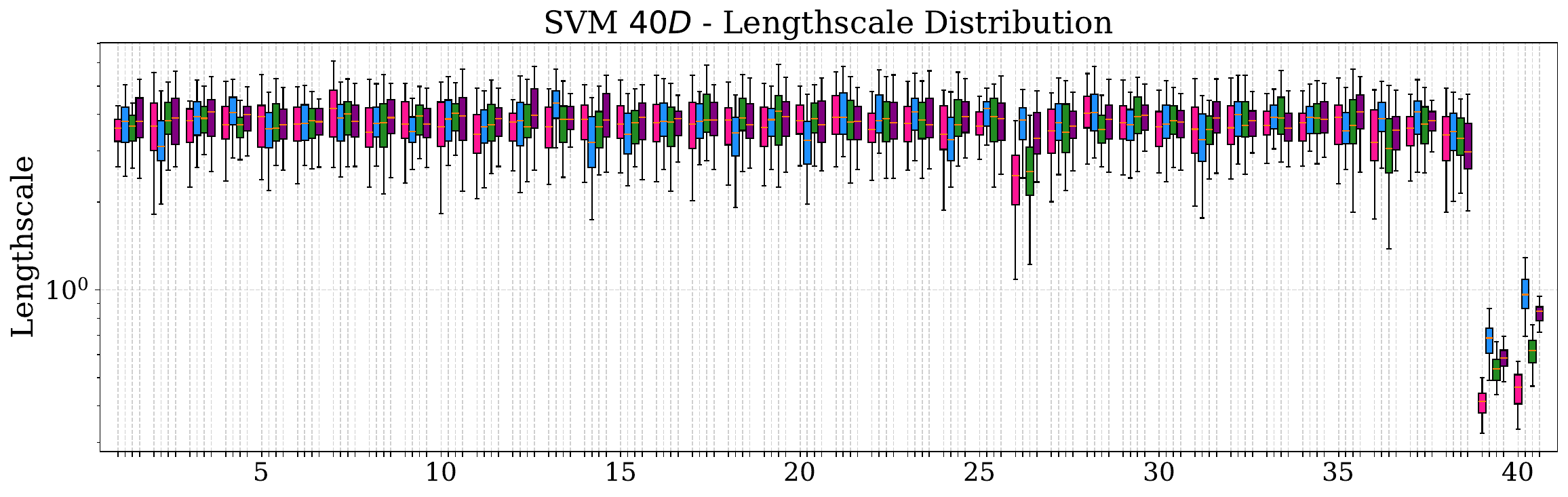}
\caption{
Estimated lengthscale hyperparameters for Sobol, Random, NIPV and \algo on the 40D-SVM task. Random and NIPV infer low lengthscales for the last two dimensions, which are known to be the most important. However, \algo does so with greater effectiveness.
}
\label{fig:svm_app_hypers}
\end{figure}

%%%%%%%%%%%%%%%%%%%%%%%%%%%%%%%%%%%%%%%%%%%%%%%%%%%%%%%%%%%%
%\section{Estimation and Optimization Ablations}
%\label{app:ablation}

% where the equalities between~\eqref{eq:hipederiv_b} and ~\eqref{eq:hipederiv_c} and ~\eqref{eq:hipederiv_c} and ~\eqref{eq:hipederiv_d} both follow from the Bayes' rule of conditional entropy and the fact that the $\argmax$ is independent of quantities which do not involve $\setx$. We obtain the intuitive result that the first term corresponds to EPIG, i.e., how much the observations help improve prediction in regions of interest, and the second term corresponds to BALD, i.e.,  how much the observations help refine the hyperparameters. For GP models, these quantities are both readily computable via Monte Carlo (MC) as the sum of~Eq.\eqref{eq:BALD} and~Eq.\eqref{eq:EPIG}. 

%%%%%%%%%%%%%%%%%%%%%%%%%%%%%%%
%\subsection{HIPE Estimation Ablations}
%% Ablate M (#models), N(#posterior samples), T (#test points)

%%%%%%%%%%%%%%%%%%%%%%%%%%%%%%%
%\subsection{Acquisition Optimization Ablations}
% Optimal value found per acq budget
% Runtime eval

\end{document}